\documentclass[10pt,a4paper,logo]{qwen}

\usepackage{amsmath}
\usepackage{amssymb}
\usepackage{amsthm}
\usepackage{booktabs}
\usepackage{graphicx}
\usepackage{natbib}
\usepackage{subcaption}
\usepackage{enumitem}
\usepackage[capitalize]{cleveref}
\graphicspath{{figures/}{fig/}}
\usepackage{booktabs}
\usepackage{tabularx}
\usepackage{array}
\usepackage[table]{xcolor}
\usepackage{wrapfig}
\usepackage{xspace}
\newcommand{\method}{\texttt{Graft}\xspace}

\definecolor{GroupBlue}{HTML}{EAF2F8}
\definecolor{GroupGray}{HTML}{F3F4F6}
\usepackage[most]{tcolorbox}
\usepackage{scalerel} 
\usepackage{multirow}
\usepackage{makecell}
\usepackage{algorithm}
\usepackage{algorithmic}

\definecolor{BoxGreen}{HTML}{E7F3E4}
\definecolor{BoxBorder}{HTML}{CFE8C8}
\definecolor{GroupBlue}{HTML}{EAF2F8}
\definecolor{GroupGray}{HTML}{F3F4F6}
\definecolor{PurpleFrame}{HTML}{B38AF7}
\definecolor{PurpleSoft}{HTML}{F8F5FF}
\definecolor{PurpleHead}{HTML}{EFE7FF}
\definecolor{PurpleText}{HTML}{9B73F2}
\definecolor{PurpleRow}{HTML}{FBF8FF}
\definecolor{purple}{RGB}{130, 73, 194}

\newtheorem{proposition}{Proposition}

\title{
\scalerel*
{\includegraphics{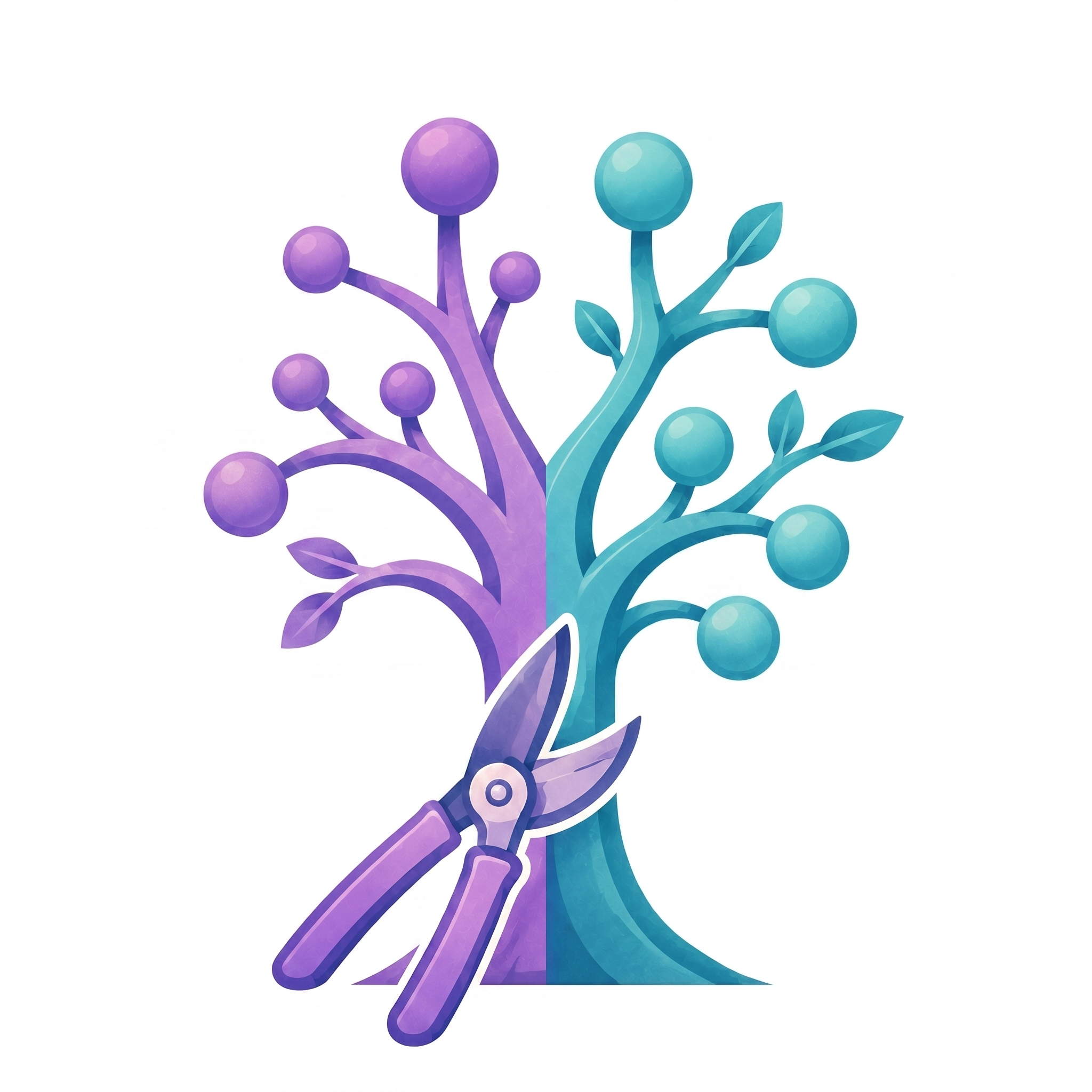}}{{\rule{3.8ex}{3.8ex}}}
Draft Less, Retrieve More: Hybrid Tree Construction for Speculative Decoding}
\author[1,2*]{Yuhao Shen}
\author[2,3*]{Tianyu Liu}
\author[2,1*]{Xinyi Hu}
\author[1]{Quan Kong}
\author[2,1]{Baolin Zhang}
\author[2]{Jun Dai}
\author[2,1\dag]{Jun Zhang}
\author[2]{Shuang Ge}
\author[2]{Lei Chen}
\author[2]{Yue Li}
\author[2]{Mingcheng Wan}
\author[1\dag]{Cong Wang}
\affil[1]{Zhejiang University}
\affil[2]{Qwen Applications Business Group of Alibaba}
\affil[3]{University of Science and Technology of China}

\begin{abstract}
Speculative decoding (SD) accelerates large language model inference by leveraging a \textit{draft-then-verify} paradigm. To maximize the acceptance rate, recent methods construct expansive draft trees, which unfortunately incur severe VRAM bandwidth and computational overheads that bottleneck end-to-end speedups. While dynamic-depth pruning can reduce this latency by removing marginal branches, it also discards potentially valid candidates, preventing the acceptance rate from reaching the upper bound of dense trees. In this paper, we identify a critical opportunity in resource allocation: the transition from dense to pruned drafting frees up significant computational budget. To break this Pareto tradeoff, we introduce \method, a compensation framework that \textbf{couples pruning and retrieval as mutually reinforcing operations.} Pruning supplies sufficient budget for retrieval, while retrieval compensates for pruning-induced coverage loss and recovers accepted length. By employing a sequential `prune-then-graft' mechanism, \method attaches highly predictive retrieved tokens into positions opened by pruning, filling the topological gaps with near-zero overhead. \method is entirely training-free and lossless.
Comprehensive evaluations show that \method establishes a \textbf{new Pareto frontier} across practical deployment settings, including short-context generation, long-context generation, and large-scale models. On short-context benchmarks, it achieves up to \textbf{5.41$\times$} speedup and improves average speedup over EAGLE-3 by up to \textbf{21.8\%} on the large-scale Qwen3-235B. On long-context benchmarks, \method reaches \textbf{3.22$\times$} average decoding speedup on LLaMA3.1-8B and outperforms EAGLE3-64K by \textbf{16.6\%} on Qwen3-14B. We also provide a preliminary exploration of applying \method to the DFlash-style block drafting paradigm, offering initial evidence and insights for extending grafting beyond autoregressive draft trees.

\end{abstract}

\begin{document}
 \maketitle

\section{Introduction}
\label{sec:intro}

Autoregressive decoding in large language models (LLMs) is inherently sequential: every generated token depends on the prefix produced so far~\citep{brown2020language}.
As LLMs scale toward larger parameter counts and longer context windows~\citep{yang2025qwen3,guo2025deepseek}, this sequential dependency becomes a persistent latency bottleneck.
System-level optimizations, such as quantization, distillation, and efficient attention~\citep{hinton2015distilling,dao2022flashattention,choi2018pact}, reduce the cost of each forward pass. However, they do not alter the \textit{token-by-token} nature of generation.
This problem is especially pronounced in real-world deployments, where long outputs and extensive KV-cache overhead amplify even minor inefficiencies in the decoding loop.

Speculative decoding (SD) losslessly relieves this bottleneck via a \textit{draft-then-verify} paradigm~\citep{leviathan2023fast,chen2023accelerating,zhang2024draft}.
Traditionally, a lightweight draft model proposes a
sequential
chain of candidate tokens, which the target model then verifies in a single parallel forward pass.
To further increase the mean accepted length (MAT) per step, recent methods evolve this chain into a token tree, verifying multiple candidate branches simultaneously~\citep{miao2024specinfer}.
Notably, EAGLE-3~\citep{li2025eagle} serves as a strong practical baseline. It reuses target-model features to predict subsequent tokens, thereby constructing high-quality draft trees.
However, this tree-based expansion exposes a significant challenge. While incorporating more candidate branches provides broader coverage and higher MAT, it also heavily increases draft-side search, memory bandwidth consumption, and verification workload. Consequently, the increased acceptance rate brought by a large draft tree often fails to translate into optimal end-to-end wall-clock speedups.

Dynamic tree construction addresses this inefficiency through dynamic-depth pruning~\citep{brown2024dynamic,hu2026echo,liu2026talon,zhang2024svip}.
When the draft model is uncertain, the tree is pruned at a shallower depth to avoid wasted computation.
Yet, this introduces a structural limitation: dynamic trees are strictly subtrees of the original static tree. Because confidence signals are noisy, fine-grained pruning inevitably causes misjudgments. The controller may prune away valid continuations, strictly bounding the MAT below the static-tree upper limit.
\textbf{This creates a strict latency-MAT frontier}, as vividly illustrated in Figure~\ref{fig:intro_tradeoff}. Dynamic pruning methods such as DDD, SVIP, and ECHO successfully run faster by reducing draft cost (moving rightward), but their accepted length inevitably falls below the dense EAGLE-3 bound. While such pruning errors are unavoidable, we argue that they can be mitigated by effectively repurposing the computational redundancy created by pruning.

Our key observation is that \textbf{pruning acts not merely as candidate removal but as a critical mechanism for budget release}.
Once low-confidence branches are pruned, the freed slots need not remain empty, nor should they be wasted on deeper uncertain drafting. Instead, they can be filled by a cheaper candidate source: retrieval.
Prior retrieval-based SD methods have explored prompt lookup, external datastores, and cached transitions~\citep{pld-saxena-2023,rest-he-2024,token-recycle-luo-2024,lookahead-fu-2024}. However, most act as standalone drafters or target-side auxiliary mechanisms. For instance, SAM-Decoding~\citep{hu2025sam} only uses retrieval quality as a routing signal to decide whether to invoke a parametric tree drafter, rather than enriching the draft tree itself. Furthermore, many existing retrieval methods rely on CPU-side lookup or synchronization-heavy structures, creating overheads that easily negate speculation gains. By seamlessly integrating retrieval to fill the topological gaps left by pruning, we can compensate for misjudgments and create a new performance bound.

\begin{figure*}[t]
    \centering
    \includegraphics[width=0.7\textwidth]{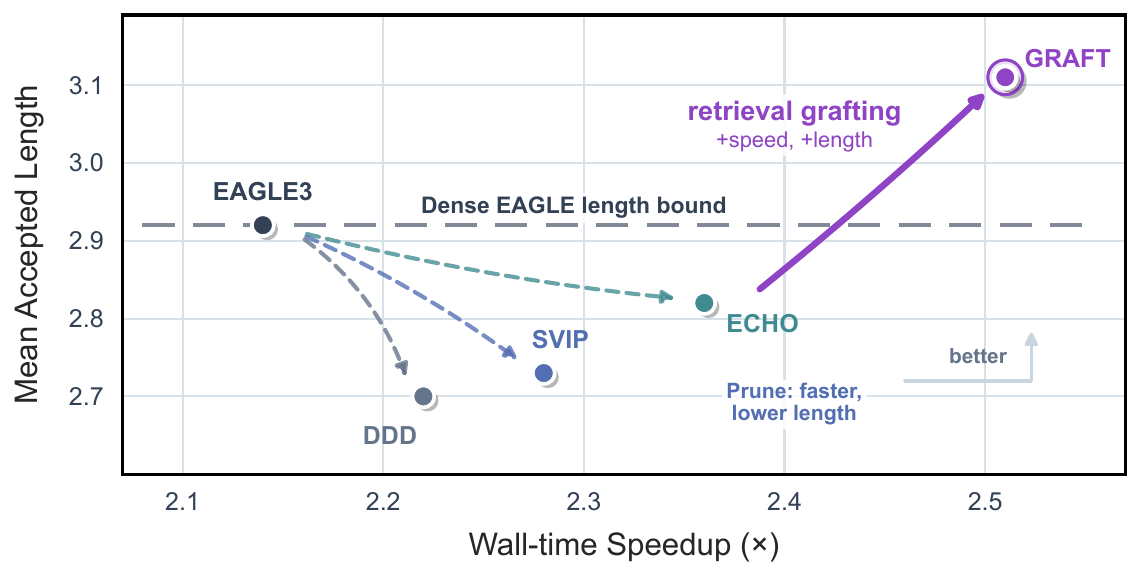}
    \caption{\footnotesize{\textbf{Speed-accepted-length tradeoff on Qwen3-32B HumanEval.}
    Each point reports wall-time speedup and mean accepted length.
    Dense EAGLE3 gives the accepted-length upper point for pruning-only subtrees.
    Dynamic pruning methods such as DDD, SVIP, and ECHO move rightward by reducing draft cost, but their accepted length falls below the dense-tree bound.
    \method uses retrieval to fill the slots released by pruning, introducing candidates beyond the original subtree and breaking this pruning trade-off under the same verification budget.}}
    \label{fig:intro_tradeoff}
\end{figure*}

Motivated by this observation, we introduce \method, a training-free and lossless hybrid tree construction framework built around a \textbf{\emph{prune-then-graft}} strategy under the principle of \textbf{\emph{draft less, retrieve more}}.
\method first performs confidence-based pruning through calibrated pruning checkpoints: when draft confidence is low, it prunes the tree and releases the remaining candidate budget.
It then grafts retrieval-based branches into the released slots, keeping the final verification budget \textbf{identical to the original static tree}. As shown in Figure~\ref{fig:intro_tradeoff}, this allows \method to introduce candidates beyond the original subtree, effectively \textbf{breaking the pruning trade-off}.
The two steps are complementary by construction: \textbf{pruning supplies budget for retrieval}, while \textbf{retrieval compensates for pruning-induced coverage loss} with context-aware continuations.
To ensure production viability, the retrieval branch is rooted at the current token and prepared in parallel with autoregressive drafting. \method utilizes a GPU-resident adjacency matrix, initializes it via warm-up, and updates it online using target-model verification signals. This design preserves the standard tree-attention path, avoids extra target forward passes, and removes CPU-side synchronization.
Crucially, this architecture naturally scales to long-context generation. As the prompt extends, autoregressive drafting becomes increasingly expensive, making pruning more rewarding. Simultaneously, the extended context provides richer local transition patterns, which naturally boosts the retrieval hit rate without additional overhead.

In summary, this paper makes the following contributions:
\begin{itemize}[leftmargin=19pt]
    \item[(1)] \textit{\textbf{A budget-compensation view of dynamic tree pruning.}}
    We analyze the \textbf{latency-MAT frontier} of dynamic tree construction. We show that pruning-only methods, restricted to static subtrees, inevitably lose MAT. This motivates treating pruned slots as reusable computational budget rather than discarded candidates, establishing the foundation for effective retrieval integration.

    \item[(2)] \textit{\textbf{\method: GPU-friendly prune-then-graft construction.}}
    We propose grafting retrieved candidates into slots released by pruning. \textbf{Pruning supplies budget for retrieval}, while \textbf{retrieval compensates for pruning-induced candidate loss}. Utilizing root-centered parallel retrieval, a GPU-resident adjacency matrix, and online target-guided updates, \method packs candidates into the standard verification path. This expands the candidate set beyond the original subtree while preserving the target verification budget and ensuring lossless decoding.

    \item[(3)] \textit{\textbf{Scaling to deployment settings.}}
    Experiments show that \method establishes a \textbf{new speed-MAT frontier} across short-context, long-context, and large-scale deployments. It achieves up to \textbf{5.41$\times$} speedup on short-context tasks and improves average speedup over EAGLE-3 by up to \textbf{21.8\%} on the \textbf{large-scale Qwen3-235B}. For long-context LLaMA3.1-8B, it reaches \textbf{3.22$\times$} average speedup, outperforming EAGLE3-64K by \textbf{16.6\%} on Qwen3-14B.
\end{itemize}

As a bonus, we also provide a preliminary exploration of applying \method to the DFlash-style block drafting paradigm, offering initial evidence and insights for extending grafting beyond autoregressive draft trees.


\section{Preliminary Study}
\label{sec:preliminary}

\subsection{The Dynamic-Tree Frontier}
\label{subsec:dynamic_tree_frontier}

\paragraph{Standard SD and the dynamic-tree bound.}
Speculative decoding (SD) accelerates autoregressive generation through a draft-then-verify pipeline~\citep{leviathan2023fast,chen2023accelerating}.
At each step, the draft model proposes a token tree $\mathcal{T}$ for the current prefix $x_{1:t}$, which the target model verifies in a single parallel forward pass~\citep{specinfer-miao-2024}.
The generation progress per step is measured by the mean accepted length (MAT), $M(\mathcal{T})=\mathbb{E}[L(\mathcal{T})]$, while the computational cost is $C(\mathcal{T})=T_{\mathrm{draft}}(\mathcal{T})+T_{\mathrm{verify}}(\mathcal{T})$.
A standard proxy for wall-clock speedup is:
\begin{equation}
\mathcal{S}(\mathcal{T})
=
\frac{(M(\mathcal{T})+1)T_{\mathrm{ar}}}{C(\mathcal{T})}.
\label{eq:pilot_speedup}
\end{equation}
EAGLE-3 employs a dense tree $\mathcal{T}_{E}$ to maximize $M(\mathcal{T})$ through broader candidate coverage~\citep{li2025eagle}.
However, this dense structure substantially increases draft-side search, memory bandwidth, and verification workload.
Dynamic-tree methods (e.g., DDD, SVIP, ECHO) mitigate this overhead through dynamic-depth pruning~\citep{brown2024dynamic,hu2026echo}.
While this reduces the cost denominator in Eq.~\ref{eq:pilot_speedup}, it introduces a strict structural ceiling. The pruned trees remain constrained as subtrees of the original dense tree:
\begin{equation}
\mathcal{T}_{\pi}\subseteq\mathcal{T}_{E}
\quad\Longrightarrow\quad
M(\mathcal{T}_{\pi})\le M(\mathcal{T}_{E}).
\label{eq:pilot_dynamic_bound}
\end{equation}
This bound illustrates the core limitation of pruning-only designs.
Removing nodes accelerates drafting but cannot introduce novel candidate paths. As a result, MAT is strictly capped by the dense-tree upper bound.

\paragraph{Latency versus MAT.}
Dynamic-depth pruning imposes a strict trade-off rather than a pure acceleration.
Relative to the dense tree, a dynamic policy $\pi$ modifies the speedup ratio as follows:
\begin{equation}
\frac{\mathcal{S}(\mathcal{T}_{\pi})}{\mathcal{S}(\mathcal{T}_{E})}
=
\underbrace{\frac{M(\mathcal{T}_{\pi})+1}{M(\mathcal{T}_{E})+1}}_{\text{MAT loss}}
\cdot
\underbrace{\frac{C(\mathcal{T}_{E})}{C(\mathcal{T}_{\pi})}}_{\text{latency saving}} .
\label{eq:pilot_tradeoff}
\end{equation}
The latency saving improves as pruning eliminates expensive draft computation. However, the MAT loss worsens when the controller over-prunes valid continuations.
Since confidence is merely a proxy for target acceptance, this risk is unavoidable.
With dense depth-wise gating, over-pruning compounds. If $\epsilon_d$ is the probability of pruning a useful continuation at depth $d$, the chance of at least one harmful pruning decision along a path grows as $1-\prod_d(1-\epsilon_d)$.
Dynamic trees therefore operate along a strict latency-MAT frontier. They save time by drafting less but pay the price in reduced accepted tokens.

This frontier highlights that pruning acts fundamentally as a mechanism for \textit{budget release} rather than mere candidate removal.
Once low-confidence branches are pruned, the freed candidate slots need not remain empty.
Instead of reinvesting these slots into highly uncertain autoregressive drafting, we can ask: can this released budget be repopulated by a cheaper candidate source unconstrained by the original subtree?

\begin{figure*}[t]
    \centering
    \includegraphics[width=0.98\textwidth]{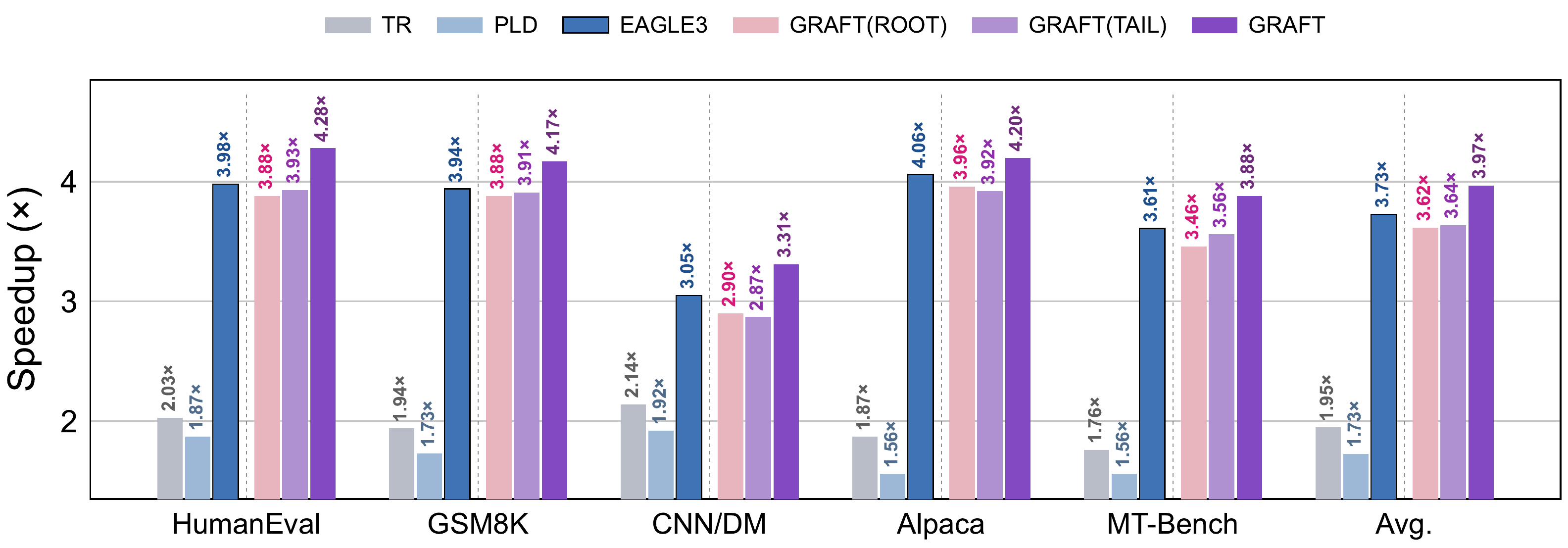}
    \caption{\footnotesize{\textbf{Motivation of \method.}
    Speedup comparison on LLaMA3.1-8B across five benchmarks and their average.
    Traditional self-SD retrieval methods such as PLD and TR are lightweight but remain far below EAGLE3.
    Under the same total candidate budget, direct grafting variants are also limited. \texttt{Graft(ROOT)} competes with original candidates, while \texttt{Graft(TAIL)} depends on a fully accepted draft prefix.
    \method couples pruning and retrieval. Pruning releases enough budget for retrieval, and retrieval repairs the coverage loss caused by pruning, achieving the best average speedup.}}
    \label{fig:pilot_tradeoff}
\end{figure*}

\subsection{Retrieval as Budget Compensation}
\label{subsec:retrieval_compensation}
\begin{wrapfigure}{r}{0.40\textwidth}
\vspace{-5mm}
  \includegraphics[width=0.40\textwidth]{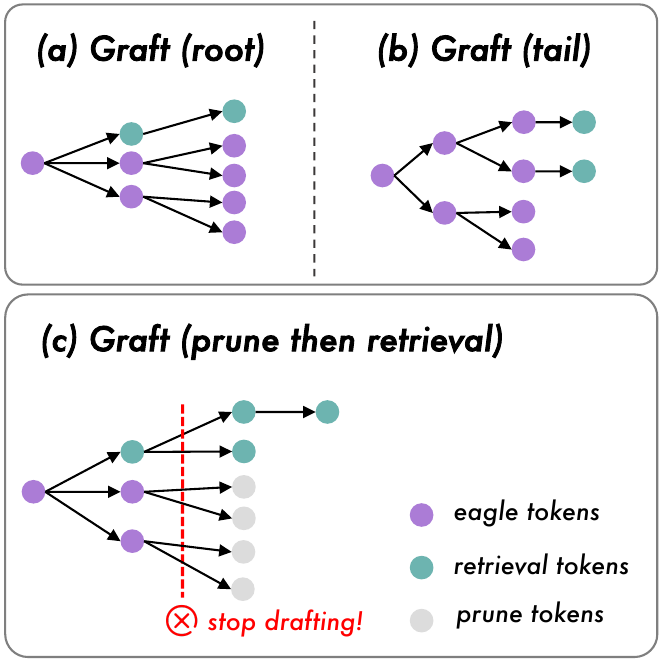}
  \vspace{-4mm}
  \caption{\footnotesize{Comparison of three retrieval insertion strategies.}}
  \label{fig:scaling_mats}
  \vspace{-8mm}
\end{wrapfigure}
This budget-release perspective points to non-parametric retrieval.
Local continuations from the prompt or generation history can be reused with minimal additional computation.
However, retrieval alone cannot replace the original drafter.
Systems like PLD and Token Recycling (TR) reuse past transitions to accelerate the target model directly~\citep{pld-saxena-2023,token-recycle-luo-2024}.
They operate in isolation and fail to combine retrieved candidates with generated draft tokens.
Without this synergy, their speedup ceilings are severely constrained and fall far below strong tree-based baselines.
Many retrieval systems also rely on CPU-side datastores or synchronization-heavy logic.
This creates overhead that easily negates speculation gains in large-model serving.
Retrieval is an attractive compensation source but remains insufficient on its own.

\emph{How should retrieval be incorporated into a fixed-budget draft tree?}
Two straightforward insertion strategies highlight the critical importance of budget allocation.
\texttt{Graft(ROOT)} attaches a retrieved branch directly to the root alongside original candidates.
This is easy to deploy because the root token is known early and enables parallel retrieval.
However, it demands extra candidate slots. Under a fixed verification budget, these slots displace strong candidates near the root and dilute the retrieval benefit.
Alternatively, \texttt{Graft(TAIL)} appends retrieval to the ends of branches.
This avoids root-level competition but introduces a strict prefix dependency.
The retrieved suffix is only helpful if the entire preceding draft prefix is accepted. A single early rejection renders the tail unreachable.

Figure~\ref{fig:pilot_tradeoff} isolates these limitations by constraining all static-tree variants to the same total candidate budget.
\texttt{Graft(ROOT)} falls below EAGLE3 on average because it displaces too many high-quality draft nodes.
\texttt{Graft(TAIL)} performs slightly better but yields only marginal and task-dependent gains.
These observations dictate that retrieval should not simply sit beside or behind the dense tree.
Instead, it must be inserted \textbf{exactly where pruning creates space}.
\textbf{Pruning supplies the budget} that retrieval typically lacks. \textbf{Retrieval repairs the coverage lost by pruning} and mitigates the MAT penalty from over-pruning.
We formalize this budget compensation as follows. After pruning, \method grafts retrieved candidates into the freed topological space:
\begin{equation}
\mathcal{T}_{G}=\mathcal{T}_{\pi}\cup\mathcal{G},
\qquad
|\mathcal{T}_{G}|\le |\mathcal{T}_{E}|,
\qquad
\mathcal{T}_{G}\not\subseteq \mathcal{T}_{E}\ \text{in general}.
\label{eq:pilot_graft_tree}
\end{equation}
By incorporating nodes outside the original subtree, \method bypasses the dynamic-tree constraint.
It preserves the low latency of pruned drafting while recovering candidate coverage via cheap retrieved continuations.
This successfully breaks the pruning-only Pareto frontier.
The core motivation of \method is to prune unreliable drafting to release budget and then graft retrieval candidates into those exact slots. This repairs coverage loss within a single and fixed-budget target verification pass.

\section{\method: Hybrid Tree Construction for Speculative Decoding}
\label{sec:method}

\method follows a simple principle: \emph{draft less when autoregressive drafting becomes unreliable, and retrieve more with the budget released by pruning}.
Although our implementation uses EAGLE-3~\citep{li2025eagle} as the base tree drafter, the design only assumes a fixed-budget tree-style speculative verifier.
Instead of expanding the full static draft tree at every decoding step, \method first applies pruning to low-confidence draft branches. It then grafts retrieval-based branches into the released slots and verifies the merged tree with the unchanged target-model verification rule.
The final tree keeps the original verification budget but reallocates capacity from uncertain drafting to cheap context-aware continuations.
As summarized in Figure~\ref{fig:graft_overview}, the method has three coupled components. Section~\ref{subsec:dynamic_depth_pruning} releases budget through calibrated pruning. Section~\ref{subsec:grafted_retrieval} spends that budget through retrieval grafting, and Section~\ref{subsec:verify_update} verifies the hybrid tree while updating the retrieval state.

\subsection{Dynamic-Depth Pruning for Budget Release}
\label{subsec:dynamic_depth_pruning}

The first challenge, illustrated in Figure~\ref{fig:graft_overview}(a), is deciding how far to expand the base draft tree without introducing excessive decision error.
Dense dynamic-tree methods like Dynamic Depth Decoding (DDD)~\citep{brown2024dynamic} make fine-grained decisions at many intermediate depths.
While this control can remove wasted candidates, confidence signals at most intermediate layers remain noisy. Frequent decisions can accumulate over-pruning and prematurely remove valid branches.
\method therefore uses a small set of calibrated pruning checkpoints $d\in\mathcal{D}_{\mathrm{prune}}$.
Similar to the calibration strategy in \textsc{ECHO}~\citep{hu2026echo}, we select these checkpoints at depths where confidence is highly discriminative.
The objective fundamentally differs: \method uses pruning strictly to release budget for retrieval rather than to reshape the draft tree again.

For a node $j$ at draft depth $d$, we define its cumulative path score as:
\begin{equation}
S_{d,j}
=
S_{d-1,\mathcal{F}(j)}
+
\log q(x_{d,j}\mid h_{d-1,\mathcal{F}(j)}),
\end{equation}
where $q$ is the draft model's output distribution and $\mathcal{F}(j)$ denotes the parent node of $j$.
The highest-scoring path defines the confidence at depth $d$:
\begin{equation}
c_d
=
\exp\left(\max_j S_{d,j}\right).
\end{equation}
At each checkpoint $d\in\mathcal{D}_{\mathrm{prune}}$, \method evaluates this confidence against a calibrated threshold $\tau_d$:
\begin{equation}
g_d = \mathbb{1}[c_d > \tau_d].
\end{equation}
If $g_d=0$, \method prunes the draft tree at this checkpoint and skips deeper draft expansion.

The policy is intentionally biased toward shallow and reliable pruning decisions.
We set stricter thresholds at shallow checkpoints so that uncertain branches are pruned before drafting errors compound.
This design reduces the cumulative misjudgment risk of dense control and creates more retrieval budget exactly when the draft model proves least trustworthy.
Let $K_{\max}$ denote the verification budget of the original static base tree.
For a pruning stage $s$, \method decomposes this fixed budget as:
\begin{equation}
K_{\max}
=
K^{\mathrm{draft}}_s
+
K^{\mathrm{ret}}_s,
\end{equation}
where $K^{\mathrm{draft}}_s$ is the number of retained draft-tree nodes and $K^{\mathrm{ret}}_s$ is the budget assigned to retrieval.
Shallower pruning stages allocate fewer draft nodes and grant a larger budget to retrieval, while deeper pruning stages preserve more of the original draft tree.
Dynamic-depth pruning therefore reshapes the composition of the fixed candidate budget without inflating the target-side verification cost.

\subsection{Retrieval-Grafted Tree Construction}
\label{subsec:grafted_retrieval}

After pruning releases candidate slots, the next question is how to spend them without creating another serial bottleneck, as shown in Figure~\ref{fig:graft_overview}(b).
Prior retrieval-based speculative decoding methods often build candidates through prompt lookup, suffix matching, or external datastore search~\citep{pld-saxena-2023, rest-he-2024, liu2025logitspec, hu2025sam}, whose cost is paid on the critical path.
\method instead uses a GPU-resident transition matrix and a root-centered retrieval template.
Once the current root token is known, retrieval can start without waiting for deeper draft nodes; after the pruning stage is known, \method selects the stage-matched prefix of this retrieved template and grafts it into the released slots.
This makes retrieval a budget-compensation module inside fixed-budget tree construction, rather than a standalone drafter.

\begin{figure}[t]
    \centering
    \includegraphics[width=1\linewidth]{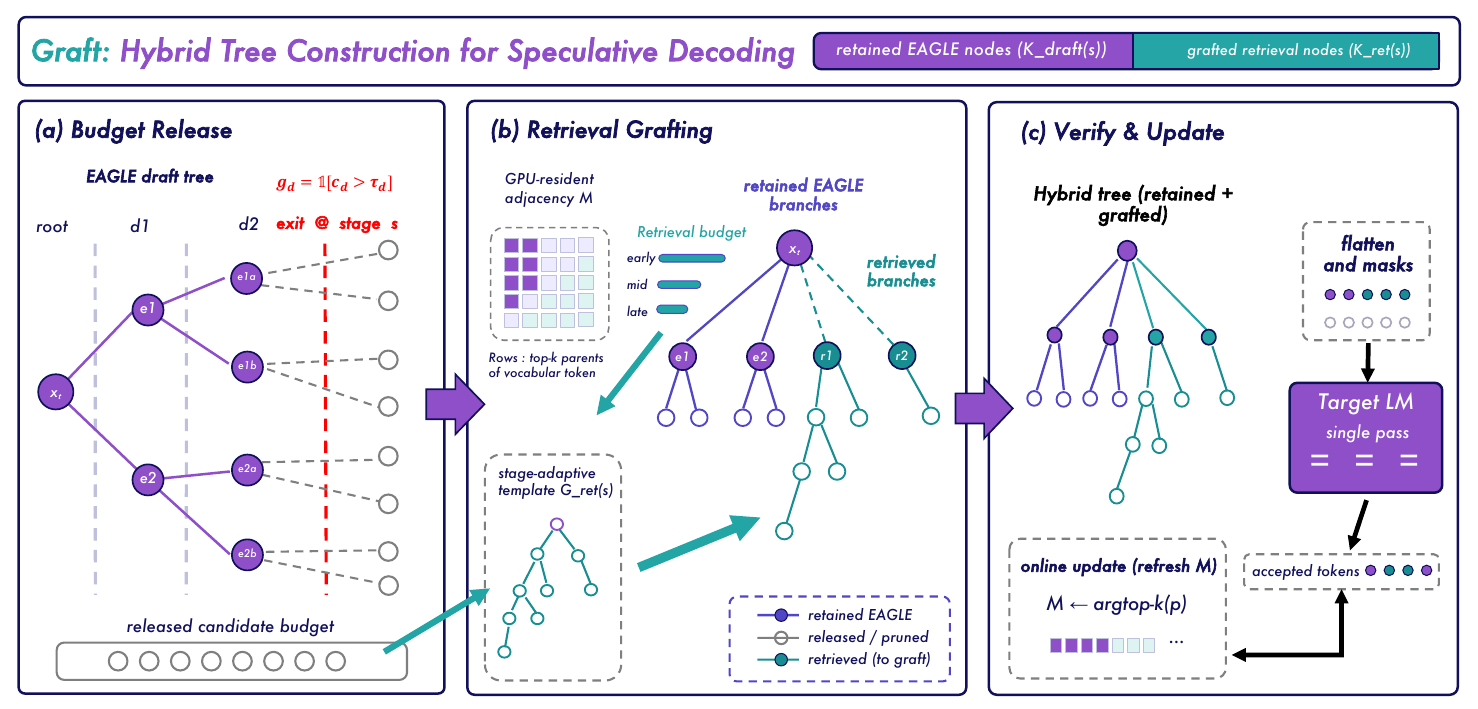}
    \caption{\footnotesize{\textbf{Overview of \method.}
    \textbf{(a) Budget release:} dynamic-depth pruning removes unreliable draft-tree nodes and frees candidate slots under the same verification budget.
    \textbf{(b) Retrieval grafting:} root-centered retrieval runs in parallel with drafting, selects a stage-adaptive template, and fills the released slots with retrieved candidates.
    \textbf{(c) Verification and update:} retained draft nodes and grafted retrieval nodes are flattened into one hybrid tree for a single target-model verification pass, after which verified nodes refresh the GPU-resident retrieval matrix.}}
    \label{fig:graft_overview}
\end{figure}

\paragraph{GPU-resident adjacency matrix.}
\method maintains an adjacency matrix
\begin{equation}
\mathcal{M}\in \mathcal{V}^{|\mathcal{V}|\times k},
\end{equation}
where each row stores the top-$k$ candidate successors for a vocabulary token.
The storage cost is $O(|\mathcal{V}|k)$ token IDs, where $k$ is small in practice.
The matrix is stored directly on GPU, so generating a retrieved node reduces to a single row-column gather from its parent token.
Compared with CPU-side tries, hash tables, or dynamically synchronized datastores, this representation avoids host-device communication and irregular control flow during decoding.
This property is crucial for serving systems, where small synchronization overheads can erase the gains of speculation under high concurrency.

\paragraph{Lightweight parallel retrieval.}
This matrix design makes retrieval cheap enough to serve as budget compensation rather than a second parametric drafter.
For a retrieval template $\mathcal{G}^{\mathrm{ret}}_s$ with $N_s$ retrieved nodes, depth $d_s$, and maximum layer width $B_s$, the total lookup work is $O(N_s)\le O(B_s d_s)$.
Since nodes at the same depth are independent once their parents are known, they are generated by batched GPU gathers, making the critical path scale with $d_s$ instead of $N_s$.
Although this cost is small compared with transformer-based drafting, placing it after pruning would still introduce a serial stage in every decoding round.
\method therefore materializes the root-centered retrieval envelope in parallel with tree drafting and only selects the prefix that matches the released budget.
This preserves the latency saving from pruning while giving retrieval enough room to recover candidate coverage.

\paragraph{Stage-adaptive retrieval templates.}
The retrieval template is designed to fit the static base-tree envelope.
Let $D_E$ and $W_E$ denote the maximum depth and maximum width of the original base tree.
By default, the retrieval template uses the same maximum depth and width, so retrieved nodes follow a topology compatible with the tree verification path and later matrix updates.
During decoding, \method first prepares a full root-centered retrieval envelope and then selects a stage-specific sub-template after pruning determines how many slots have been released.
Let $\mathcal{G}^{\mathrm{ret}}_s$ denote the retrieval sub-template for pruning stage $s$.
Starting from the root token $x_t$, each retrieved node $u$ is generated by indexing the matrix row of its parent:
\begin{equation}
x_u = \mathcal{M}[x_{\mathcal{F}(u)}, r_u],
\end{equation}
where $\mathcal{F}(u)$ denotes the parent of node $u$, and $r_u$ is the successor rank specified by $\mathcal{G}^{\mathrm{ret}}_s$.
Concretely, if pruning happens at the root stage $d_0$, most of the original tree budget is released, so \method keeps only a small set of reliable draft nodes and fills a large retrieval branch.
If pruning happens after an intermediate stage such as $d_1$, fewer slots are released and \method uses a medium retrieval branch.
If the draft tree survives until a deeper checkpoint such as $d_5$, only a small retrieval branch is grafted; if no pruning is triggered, the retrieval branch is empty or minimal.
In our implementation with a $60$-node verification budget, these stages keep $8$, $24$, and $40$ draft nodes and assign $52$, $36$, and $20$ slots to retrieval, respectively.

The templates are static within each stage and intentionally imbalanced.
High-rank successors receive more children and extend deeper, forming a strong greedy continuation chain, while lower-rank successors receive fewer descendants to preserve sibling coverage without wasting budget.
This follows the same intuition as Token Recycling~\citep{token-recycle-luo-2024}: spend more verification budget on high-yield successor ranks while retaining enough width for alternatives.
Appendix~\ref{app:retrieval_template} provides the exact template distributions used in our implementation.

Finally, \method merges the retained draft nodes and retrieved nodes into a single hybrid tree:
\begin{equation}
\mathcal{T}_s
=
\mathcal{T}^{\mathrm{draft}}_s
\cup
\mathcal{G}^{\mathrm{ret}}_s,
\qquad
|\mathcal{T}_s| = K_{\max}.
\end{equation}
This equation makes the role of grafting explicit: retrieval changes the candidate identities inside the tree, but the total verification budget remains fixed.

\subsection{Hybrid Verification and Online Update}
\label{subsec:verify_update}

The final stage, shown in Figure~\ref{fig:graft_overview}(c), must satisfy two requirements: it must preserve lossless speculative verification, and it must keep the retrieval source aligned with the current context.
\method handles both requirements in the verification stage.
The target model verifies retained draft nodes and grafted retrieval nodes uniformly, and the resulting target distributions are used to refresh the GPU-resident adjacency matrix.

\paragraph{Hybrid-tree verification.}
After grafting, all candidate tokens in $\mathcal{T}_s$ are flattened into a one-dimensional token list according to their parent-child relations.
From this list, \method rebuilds the standard tree position IDs, ancestor mask, and candidate-path indices, and then sends the whole hybrid tree to the target model in one parallel verification pass.
Because $|\mathcal{T}_s|=K_{\max}$, the target-side tree-attention interface and mask shape are unchanged from the base tree verifier; only the token identities occupying part of the tree are replaced by retrieved candidates.
The acceptance rule is also unchanged.
Retrieved tokens are proposals rather than committed outputs, and every output token must still be accepted by the target model under the standard speculative decoding rule.
Therefore, \method preserves the lossless property of speculative decoding without modifying the target model or the tree-attention kernel.

\paragraph{Warm-up and online update.}
A remaining issue for retrieval is cold start.
Methods such as Prompt Lookup Decoding (PLD)~\citep{pld-saxena-2023} rely heavily on tokens already present in the prompt, so their retrieval space is limited at the beginning of generation.
\method mitigates this limitation with a lightweight warm-up phase, using the same warm-up protocol as the base tree-drafter evaluation.
Warm-up serves two roles: it initializes the adjacency matrix with prior successor patterns, and it calibrates the pruning checkpoints and thresholds $\{\tau_d\}$ following the same spirit as \textsc{ECHO}.
As a result, \method combines static prior knowledge with dynamic information from the current user prompt and recent generation history.

During generation, the matrix is continuously refreshed using the target model's output distributions over verified tree nodes:
\begin{equation}
\mathcal{M}[\tilde{x}_i]
=
\operatorname{argtop}k(\tilde{p}_{i+1}),
\end{equation}
where $\tilde{x}_i$ is a verified draft token and $\tilde{p}_{i+1}$ is the target next-token distribution at that position.
The update uses the whole verified tree, including both accepted and rejected nodes, so rejected candidates can still contribute useful target-guided successor information.
Tree verification is particularly suitable for this update rule: each verification step evaluates many candidate nodes under different prefixes, providing rich transition signals for the adjacency matrix at almost no extra cost.
As decoding proceeds, the retrieval matrix becomes increasingly aligned with the current context.

\paragraph{Long-context adaptation.}
\method further adapts grafted retrieval to long-context serving.
When the prefix becomes longer, each draft branch must attend to a larger history, so pruning saves a larger fraction of draft-side overhead.
At the same time, long prompts and verified tree nodes expose more local transition patterns, which makes the GPU-resident retrieval matrix more informative before and during generation.
We therefore initialize the retrieval state with warm-up and prompt-prefill signals, and then keep refreshing it through online tree verification.
For this long-context instantiation, we adopt a YaRN-adapted 64K EAGLE-3 checkpoint following the long-context setting in SpecPV~\citep{tan2025specpv}, since the public tree-drafter checkpoints we use are mainly trained around 2K context windows.
For execution, \method uses chunked prefilling for long prompts and scaled dot-product attention (SDPA) for tree verification, making the retrieval module suitable for long-context industrial serving without CPU-side retrieval or extra target-model forward passes.

Overall, \method couples pruning, retrieval grafting, and target-guided update.
Dynamic-depth pruning releases budget from unreliable autoregressive drafting; retrieval grafting repopulates that budget with cheap context-aware candidates; hybrid verification preserves losslessness and keeps the retrieval matrix aligned with the target model.

\section{Experiments}
\label{sec:experiments}

\subsection{Experimental Setting}
\label{subsec:experimental_setting}

\paragraph{Datasets and models.}
We evaluate \method under three complementary settings: short-context generation, long-context generation, and high-concurrency serving.
For short-context evaluation, we follow the protocols of EAGLE-3~\citep{li2025eagle} and Spec-Bench~\citep{spec-bench-xia-2024} on five benchmarks covering code generation, mathematical reasoning, summarization, and open-ended chat: HumanEval~\citep{humaneval-chen-2021}, GSM8K~\citep{gsm8k-cobbe-2021}, CNN/DM~\citep{cnn-daily-nallapati-2016}, Alpaca~\citep{taori2023alpaca}, and MT-Bench~\citep{mt-bench-zheng-2023}.
We evaluate a diverse set of target models, including Vicuna-13B~\citep{vicuna2023}, LLaMA-3.1-8B~\citep{grattafiori2024llama3herdmodels}, and Qwen3-8B/32B/235B~\citep{yang2025qwen3}.
For long-context evaluation, we use QMSum, GovReport, MultiNews, LCC, and RepoBench-P from LongBench \citep{bai2024longbench, bai2024longbench2}, covering meeting summarization, report summarization, multi-document summarization, code completion, and repository-level code completion.
Long-context experiments are conducted on LLaMA-3.1-8B and Qwen3-4B/8B/14B with EAGLE3-64K draft modules, following the YaRN-adapted EAGLE setting used in SpecPV~\citep{tan2025specpv}.
We also include a length-scaling study from 4K to 32K context tokens to evaluate whether the retrieval matrix benefits from richer prompt evidence.

\begin{table*}[t]
\caption{\footnotesize{\textbf{Main results on five benchmarks under the short-context setting}. Performance comparison between \method and existing baselines across diverse model configurations. Bold numbers denote the best speedup and MAT.}}
\label{tab:main_resultes}
\centering
\resizebox{\linewidth}{!}{%
\begin{tabular}{c l cc cc cc cc cc c}
\toprule
\multirow{2}{*}{Models} & \multirow{2}{*}{Methods} & \multicolumn{2}{c}{HumanEval} & \multicolumn{2}{c}{GSM8K} & \multicolumn{2}{c}{CNN/DM} & \multicolumn{2}{c}{Alpaca} & \multicolumn{2}{c}{MT-Bench}&\multirow{2}{*}{Avg.}\\
\cmidrule(lr){3-4} \cmidrule(lr){5-6} \cmidrule(lr){7-8} \cmidrule(lr){9-10} \cmidrule(lr){11-12} & &MAT & Speedup & MAT & Speedup & MAT & Speedup & MAT & Speedup & MAT & Speedup\\
\midrule
\multirow{10}{*}{\makecell[c]{Vicuna-13B}}
& Lookahead       & 1.75 & 1.64$\times$ & 1.89 & 1.75$\times$ & 1.51 & 1.46$\times$ & 1.47 & 1.49$\times$ & 1.65 & 1.61$\times$ & 1.59$\times$\\
& Sps             & 2.53 & 1.78$\times$ & 1.95 & 1.71$\times$ & 2.26 & 1.69$\times$ & 2.01 & 1.76$\times$ & 2.26 & 1.79$\times$ & 1.75$\times$\\
& Medusa          & 2.78 & 2.21$\times$ & 2.64 & 2.15$\times$ & 2.05 & 1.63$\times$ & 2.36 & 1.89$\times$ & 2.61 & 2.13$\times$ & 2.00$\times$\\
& PLD             & 1.91 & 1.85$\times$ & 1.97 & 1.75$\times$ & 2.31 & 2.45$\times$ & 1.26 & 1.20$\times$ & 1.55 & 1.60$\times$ & 1.77$\times$\\
& SAMD            & 2.51 & 2.25$\times$ & 2.47 & 1.91$\times$ & 2.71 & 2.05$\times$ & 2.23 & 1.87$\times$ & 2.15 & 1.74$\times$ & 1.96$\times$\\
& TR              & 2.73 & 2.09$\times$ & 2.68 & 1.99$\times$ & 2.88 & 2.24$\times$ & 2.49 & 1.95$\times$ & 2.37 & 1.82$\times$ & 2.02$\times$\\
& DDD             & 8.21 & 4.97$\times$ & 6.23 & 3.94$\times$ & 6.07 & 3.48$\times$ & 5.96 & 3.57$\times$ & 6.07 & 3.99$\times$ & 3.99$\times$\\
& ECHO            & 8.35 & 5.15$\times$ & 6.51 & 3.97$\times$ & 6.22 & 3.49$\times$ & 6.28 & 3.69$\times$ & 6.35 & 4.05$\times$ & 4.07$\times$\\
& EAGLE3-2K       & 8.42 & 4.80$\times$ & 6.81 & 3.83$\times$ & 6.42 & 3.42$\times$ & 6.52 & 3.67$\times$ & 6.81 & 3.85$\times$ & 3.91$\times$\\
&\cellcolor{gray!21}\textbf{\method} & \cellcolor{gray!21}\textbf{8.53} & \cellcolor{gray!21}\textbf{5.41}$\times$ & \cellcolor{gray!21}\textbf{6.85} & \cellcolor{gray!21}\textbf{4.13}$\times$ & \cellcolor{gray!21}\textbf{6.51} & \cellcolor{gray!21}\textbf{3.63}$\times$ & \cellcolor{gray!21}\textbf{6.54} & \cellcolor{gray!21}\textbf{3.79}$\times$ & \cellcolor{gray!21}\textbf{6.93} & \cellcolor{gray!21}\textbf{4.13}$\times$ & \cellcolor{gray!21}\textbf{4.20}$\times$ \\

\midrule
\multirow{4}{*}{\makecell[c]{LLaMA3.1-8B}}
& TR              & 2.56 & 2.03$\times$ & 2.48 & 1.94$\times$ & 2.65 & 2.14$\times$ & 1.94 & 1.87$\times$ & 1.87 & 1.76$\times$ & 1.95$\times$\\
& ECHO            & 7.12 & 4.13$\times$ & 6.28 & 4.07$\times$ & 5.24 & 3.18$\times$ & 6.83 & 4.13$\times$ & 6.23 & 3.77$\times$ & 3.86$\times$\\
& EAGLE3-2K       & 7.19 & 3.98$\times$ & 6.50 & 3.94$\times$ & 5.47 & 3.05$\times$ & 7.09 & 4.06$\times$ & 6.42 & 3.61$\times$ & 3.73$\times$ \\
& \cellcolor{gray!21}\textbf{\method} & \cellcolor{gray!21}\textbf{7.28} & \cellcolor{gray!21}\textbf{4.28}$\times$ & \cellcolor{gray!21}\textbf{6.53} & \cellcolor{gray!21}\textbf{4.17}$\times$ & \cellcolor{gray!21}\textbf{5.65} & \cellcolor{gray!21}\textbf{3.31}$\times$ & \cellcolor{gray!21}\textbf{7.12} & \cellcolor{gray!21}\textbf{4.20}$\times$ & \cellcolor{gray!21}\textbf{6.49} & \cellcolor{gray!21}\textbf{3.88}$\times$ & \cellcolor{gray!21}\textbf{3.97}$\times$ \\

\midrule
\multirow{5}{*}{\makecell[c]{Qwen3-8B}}
& TR              & 2.45 & 1.97$\times$ & 2.56 & 1.98$\times$ & 2.67 & 2.13$\times$ & 2.31 & 1.79$\times$ & 2.36 & 1.86$\times$ & 1.95$\times$\\
& ECHO            & 3.85 & 2.44$\times$ & 3.82 & 2.33$\times$ & 3.18 & 2.23$\times$ & 3.32 & 2.19$\times$ & 3.68 & 2.18$\times$ & 2.27$\times$\\
& EAGLE3          & 3.91 & 2.28$\times$ & 3.92 & 2.17$\times$ & 3.25 & 1.95$\times$ & 3.44 & 2.09$\times$ & 3.71 & 2.03$\times$ & 2.10$\times$\\
&\cellcolor{gray!21}\textbf{\method} & \cellcolor{gray!21}\textbf{3.97} & \cellcolor{gray!21}\textbf{2.58}$\times$ & \cellcolor{gray!21}\textbf{4.08} & \cellcolor{gray!21}\textbf{2.53}$\times$ & \cellcolor{gray!21}\textbf{3.41} & \cellcolor{gray!21}\textbf{2.37}$\times$ & \cellcolor{gray!21}\textbf{3.56} & \cellcolor{gray!21}\textbf{2.29}$\times$ & \cellcolor{gray!21}\textbf{3.79} & \cellcolor{gray!21}\textbf{2.36}$\times$ & \cellcolor{gray!21}\textbf{2.43}$\times$ \\
&\cellcolor{gray!8}{$\bigtriangleup$ $(\uparrow, \%)$} & \cellcolor{gray!8}{} & \cellcolor{gray!8}{\textcolor{blue}{$\uparrow$ 13.2\%}} & \cellcolor{gray!8}{} & \cellcolor{gray!8}{\textcolor{blue}{$\uparrow$ 16.6\%}} & \cellcolor{gray!8}{} & \cellcolor{gray!8}{\textcolor{blue}{$\uparrow$ 21.5\%}} & \cellcolor{gray!8}{} & \cellcolor{gray!8}{\textcolor{blue}{$\uparrow$ 9.6\%}} & \cellcolor{gray!8}{} & \cellcolor{gray!8}{\textcolor{blue}{$\uparrow$ 16.3\%}} & \cellcolor{gray!8}{\textcolor{blue}{$\uparrow$ 15.3\%}} \\

\midrule
\multirow{5}{*}{\makecell[c]{Qwen3-32B}}
& TR              & 2.43 & 1.91$\times$ & 2.48 & 1.93$\times$ & 2.53 & 1.97$\times$ & 2.29 & 1.74$\times$ & 2.37 & 1.83$\times$ & 1.88$\times$\\
& ECHO            & 2.82 & 2.36$\times$ & 3.26 & 2.67$\times$ & 2.51 & 1.98$\times$ & 2.76 & 2.25$\times$ & 2.96 & 2.27$\times$ & 2.31$\times$\\
& EAGLE3          & 2.96 & 2.14$\times$ & 3.34 & 2.46$\times$ & 2.55 & 1.82$\times$ & 2.78 & 2.14$\times$ & 3.02 & 2.03$\times$ & 2.12$\times$\\
&\cellcolor{gray!21}\textbf{\method} & \cellcolor{gray!21}\textbf{3.11} & \cellcolor{gray!21}\textbf{2.51}$\times$ & \cellcolor{gray!21}\textbf{3.54} & \cellcolor{gray!21}\textbf{2.86}$\times$ & \cellcolor{gray!21}\textbf{2.79} & \cellcolor{gray!21}\textbf{2.19}$\times$ & \cellcolor{gray!21}\textbf{2.93} & \cellcolor{gray!21}\textbf{2.41}$\times$ & \cellcolor{gray!21}\textbf{3.15} & \cellcolor{gray!21}\textbf{2.46}$\times$ & \cellcolor{gray!21}\textbf{2.49}$\times$ \\
&\cellcolor{gray!8}{$\bigtriangleup$ $(\uparrow, \%)$} & \cellcolor{gray!8}{} & \cellcolor{gray!8}{\textcolor{blue}{$\uparrow$ 17.3\%}} & \cellcolor{gray!8}{} & \cellcolor{gray!8}{\textcolor{blue}{$\uparrow$ 16.3\%}} & \cellcolor{gray!8}{} & \cellcolor{gray!8}{\textcolor{blue}{$\uparrow$ 20.3\%}} & \cellcolor{gray!8}{} & \cellcolor{gray!8}{\textcolor{blue}{$\uparrow$ 12.6\%}} & \cellcolor{gray!8}{} & \cellcolor{gray!8}{\textcolor{blue}{$\uparrow$ 21.2\%}} & \cellcolor{gray!8}{\textcolor{blue}{$\uparrow$ 17.4\%}} \\

\midrule
\multirow{5}{*}{\makecell[c]{Qwen3-235B}}
& TR              & 2.31 & 1.85$\times$ & 2.17 & 1.75$\times$ & \textbf{2.23} & 1.77$\times$ & 1.96 & 1.66$\times$ & 2.05 & 1.72$\times$ & 1.75$\times$\\
& ECHO            & 2.30 & 2.13$\times$ & 2.58 & 1.88$\times$ & 2.01 & 1.64$\times$ & 2.21 & 1.92$\times$ & 2.38 & 2.02$\times$ & 1.92$\times$\\
& EAGLE3-2K       & 2.39 & 1.81$\times$ & 2.74 & 1.73$\times$ & 2.02 & 1.45$\times$ & 2.32 & 1.77$\times$ & 2.56 & 1.81$\times$ & 1.71$\times$\\
&\cellcolor{gray!21}\textbf{\method} & \cellcolor{gray!21}\textbf{2.42} & \cellcolor{gray!21}\textbf{2.24}$\times$ & \cellcolor{gray!21}\textbf{2.85} & \cellcolor{gray!21}\textbf{2.06}$\times$ & \cellcolor{gray!21}2.21 & \cellcolor{gray!21}\textbf{1.83}$\times$ & \cellcolor{gray!21}\textbf{2.44} & \cellcolor{gray!21}\textbf{2.13}$\times$ & \cellcolor{gray!21}\textbf{2.61} & \cellcolor{gray!21}\textbf{2.18}$\times$ & \cellcolor{gray!21}\textbf{2.09}$\times$ \\
&\cellcolor{gray!8}{$\bigtriangleup$ $(\uparrow, \%)$} & \cellcolor{gray!8}{} & \cellcolor{gray!8}{\textcolor{blue}{$\uparrow$ 23.8\%}} & \cellcolor{gray!8}{} & \cellcolor{gray!8}{\textcolor{blue}{$\uparrow$ 19.1\%}} & \cellcolor{gray!8}{} & \cellcolor{gray!8}{\textcolor{blue}{$\uparrow$ 26.2\%}} & \cellcolor{gray!8}{} & \cellcolor{gray!8}{\textcolor{blue}{$\uparrow$ 20.3\%}} & \cellcolor{gray!8}{} & \cellcolor{gray!8}{\textcolor{blue}{$\uparrow$ 20.4\%}} & \cellcolor{gray!8}{\textcolor{blue}{$\uparrow$ 21.8\%}} \\

\bottomrule
\end{tabular}
}
\end{table*}

\paragraph{Baselines and implementation.}
We benchmark \method against representative methods across four categories of speculative decoding:
(1) standard speculative decoding~\citep{chen2023accelerating};
(2) retrieval-based methods, including Lookahead~\citep{lookahead-fu-2024}, PLD~\citep{pld-saxena-2023}, SAMD~\citep{hu2025sam}, and TR~\citep{token-recycle-luo-2024};
(3) training-based methods, including Medusa~\citep{cai2024medusa} and EAGLE-3~\citep{li2025eagle};
and (4) dynamic-tree methods, including DDD~\citep{brown2024dynamic} and ECHO~\citep{hu2026echo}.
For long-context experiments, we additionally compare against EAGLE3-64K and long-context speculative decoding baselines, including MagicDec~\citep{sadhukhan2024magicdec}, TokenSwift~\citep{wu2025tokenswift}, and TriForce~\citep{sun2024triforce}.
Some baselines only provide official implementations or released draft checkpoints for specific model families, mainly LLaMA3.1-8B.
Therefore, we report each baseline on the compatible model settings supported by its public implementation, while keeping the target model, decoding configuration, and tree budget matched whenever possible.
All experiments are conducted on 8 NVIDIA H20 GPUs with greedy decoding unless otherwise specified.
The main short-context and long-context results are implemented in HuggingFace \texttt{transformers}; the high-concurrency ablations are implemented in \texttt{SGLang} to measure serving throughput under batched execution.
For consistency across baselines, we implement all attention operations in long-context experiments using PyTorch scaled dot product attention.
All baselines are reproduced using their official configurations, with detailed hyperparameters provided in Appendix~\ref{sec:appendix_eval}.

\paragraph{Metrics.}
Since \method follows the target-model verification rule and does not modify the target model weights, we focus on acceleration metrics rather than generation quality.
\textbf{Speedup} measures the actual speedup ratio relative to vanilla autoregressive decoding.
For long-context experiments, speedup is computed using decoding time only, so all methods are compared after the shared long-prompt prefill stage.
\textbf{mean accepted length (MAT)} measures the average number of tokens accepted per verification step.
In the length-scaling ablation, we also report decoding throughput in tokens per second.

\subsection{Main Results}
\label{subsec:main_results}

\paragraph{Short-context case.}
Table~\ref{tab:main_resultes} summarizes the performance of \method in the short-context setting.
\method achieves consistent acceleration across all evaluated model families, with speedups ranging from \textbf{1.83$\times$} to \textbf{5.41$\times$}.
The results demonstrate three distinct advantages:
\noindent\textbf{(1) Acceleration over Static Trees.}
Compared with EAGLE-3, \method improves the average speedup across all model groups: \textbf{7.4\%} on Vicuna-13B, \textbf{5.9\%} on LLaMA3.1-8B, \textbf{15.7\%} on Qwen3-8B, \textbf{17.5\%} on Qwen3-32B, and \textbf{21.8\%} on Qwen3-235B.
Under the same total tree budget, \method also keeps MAT comparable or higher, improving average MAT by \textbf{3.2\%} on Qwen3-8B, \textbf{6.4\%} on Qwen3-32B, and \textbf{3.3\%} on Qwen3-235B.
These gains come from both sides of the hybrid tree: pruning reduces draft cost, while retrieval fills the released budget with useful continuations rather than simply shrinking the candidate tree.
The improvement in both speed and MAT indicates that the two components are mutually reinforcing: pruning gives retrieval enough room to affect the tree, and retrieval prevents the pruned tree from paying a large accepted-length penalty.

\noindent\textbf{(2) Retrieval over Pruning Alone.}
Compared with ECHO, which mainly relies on dynamic-depth pruning, \method is consistently faster and accepts more tokens.
The average speedup improves over ECHO by \textbf{3.2\%} on Vicuna-13B, \textbf{2.9\%} on LLaMA3.1-8B, \textbf{7.0\%} on Qwen3-8B, \textbf{7.8\%} on Qwen3-32B, and \textbf{8.9\%} on Qwen3-235B.
At the same time, MAT increases by \textbf{4.5\%}, \textbf{4.0\%}, \textbf{5.4\%}, \textbf{7.7\%}, and \textbf{8.3\%}, respectively.
This confirms that retrieval effectively compensates for over-pruning by restoring plausible branches exactly where uncertain autoregressive drafting is pruned.

\noindent\textbf{(3) Scalability to Large Models.}
\method remains effective on the large-scale Qwen3-235B model, where the average accepted length is small and speculation is harder.
It reaches a \textbf{2.09$\times$} average speedup, with an average relative gain of \textbf{21.8\%} over EAGLE3-2K (\textbf{1.71$\times$}), \textbf{19.4\%} over TR (\textbf{1.75$\times$}), and \textbf{8.9\%} over ECHO (\textbf{1.92$\times$}).
The improvement over EAGLE3-2K is consistent across all five benchmarks and reaches up to \textbf{26.2\%} on CNN/DM.
This validates the robustness of hybrid tree construction when pure autoregressive drafting has limited acceptance headroom.

\begin{table*}[t]
\caption{\footnotesize{\textbf{Main results for the long-context setting on five benchmarks.} We evaluate \method against EAGLE3-64K on five long-context benchmarks using four model configurations. Bold numbers denote the best speedup and MAT.}}
\label{tab:long_context_results}
\centering
\resizebox{\linewidth}{!}{%
\begin{tabular}{c l cc cc cc cc cc c}
\toprule
\multirow{2}{*}{Models} & \multirow{2}{*}{Methods} & \multicolumn{2}{c}{QMSum} & \multicolumn{2}{c}{GovReport} & \multicolumn{2}{c}{MultiNews} & \multicolumn{2}{c}{LCC} & \multicolumn{2}{c}{RepoBench-P}&\multirow{2}{*}{Avg.}\\
\cmidrule(lr){3-4} \cmidrule(lr){5-6} \cmidrule(lr){7-8} \cmidrule(lr){9-10} \cmidrule(lr){11-12} 
& & MAT & Speedup & MAT & Speedup & MAT & Speedup & MAT & Speedup & MAT & Speedup\\

\midrule
\multirow{6}{*}{\makecell[c]{LLaMA3.1-8B}}
& MagicDec 
& 2.02 & 0.67$\times$ 
& 2.05 & 0.68$\times$ 
& 2.31 & 0.71$\times$ 
& 2.57 & 0.82$\times$ 
& 2.61 & 0.80$\times$ 
& 0.74$\times$\\

& TokenSwift 
& 1.72 & 1.57$\times$ 
& 1.74 & 1.58$\times$ 
& 1.83 & 1.62$\times$ 
& 1.85 & 1.82$\times$ 
& 1.83 & 1.78$\times$ 
& 1.67$\times$\\

& Triforce
& 2.31 & 1.67$\times$ 
& 2.41 & 1.64$\times$ 
& 2.53 & 1.85$\times$ 
& 2.67 & 2.05$\times$ 
& 2.51 & 1.92$\times$ 
& 1.81$\times$\\

& EAGLE3-64K 
& 6.02 & 2.97$\times$ 
& 5.81 & 2.84$\times$ 
& 5.83 & 2.85$\times$ 
& 6.18 & 2.92$\times$ 
& 6.13 & 3.02$\times$ 
& 2.92$\times$\\

&\cellcolor{gray!21}\textbf{\method} 
& \cellcolor{gray!21}\textbf{6.09} & \cellcolor{gray!21}\textbf{3.14}$\times$ 
& \cellcolor{gray!21}\textbf{5.95} & \cellcolor{gray!21}\textbf{3.09}$\times$ 
& \cellcolor{gray!21}\textbf{5.97} & \cellcolor{gray!21}\textbf{3.16}$\times$ 
& \cellcolor{gray!21}\textbf{6.39} & \cellcolor{gray!21}\textbf{3.37}$\times$ 
& \cellcolor{gray!21}\textbf{6.34} & \cellcolor{gray!21}\textbf{3.32}$\times$ 
& \cellcolor{gray!21}\textbf{3.22}$\times$ \\

&\cellcolor{gray!8}{$\bigtriangleup$ $(\uparrow, \%)$} 
& \cellcolor{gray!8}{} & \cellcolor{gray!8}{\textcolor{blue}{$\uparrow$ 5.7\%}} 
& \cellcolor{gray!8}{} & \cellcolor{gray!8}{\textcolor{blue}{$\uparrow$ 8.8\%}} 
& \cellcolor{gray!8}{} & \cellcolor{gray!8}{\textcolor{blue}{$\uparrow$ 10.9\%}} 
& \cellcolor{gray!8}{} & \cellcolor{gray!8}{\textcolor{blue}{$\uparrow$ 15.4\%}} 
& \cellcolor{gray!8}{} & \cellcolor{gray!8}{\textcolor{blue}{$\uparrow$ 9.9\%}} 
& \cellcolor{gray!8}{\textcolor{blue}{$\uparrow$ 10.3\%}} \\

\midrule
\multirow{3}{*}{\makecell[c]{Qwen3-4B}}
& EAGLE3-64K 
& 3.32 & 1.67$\times$ 
& 3.12 & 1.44$\times$ 
& 2.83 & 1.55$\times$ 
& 3.78 & 2.02$\times$ 
& 3.67 & 1.72$\times$ 
& 1.68$\times$\\

&\cellcolor{gray!21}\textbf{\method} 
& \cellcolor{gray!21}\textbf{3.39} & \cellcolor{gray!21}\textbf{1.84}$\times$ 
& \cellcolor{gray!21}\textbf{3.23} & \cellcolor{gray!21}\textbf{1.68}$\times$ 
& \cellcolor{gray!21}\textbf{3.07} & \cellcolor{gray!21}\textbf{1.94}$\times$ 
& \cellcolor{gray!21}\textbf{4.00} & \cellcolor{gray!21}\textbf{2.39}$\times$ 
& \cellcolor{gray!21}\textbf{4.14} & \cellcolor{gray!21}\textbf{2.12}$\times$ 
& \cellcolor{gray!21}\textbf{1.99}$\times$ \\

&\cellcolor{gray!8}{$\bigtriangleup$ $(\uparrow, \%)$} 
& \cellcolor{gray!8}{} & \cellcolor{gray!8}{\textcolor{blue}{$\uparrow$ 10.2\%}} 
& \cellcolor{gray!8}{} & \cellcolor{gray!8}{\textcolor{blue}{$\uparrow$ 16.7\%}} 
& \cellcolor{gray!8}{} & \cellcolor{gray!8}{\textcolor{blue}{$\uparrow$ 25.2\%}} 
& \cellcolor{gray!8}{} & \cellcolor{gray!8}{\textcolor{blue}{$\uparrow$ 18.3\%}} 
& \cellcolor{gray!8}{} & \cellcolor{gray!8}{\textcolor{blue}{$\uparrow$ 23.3\%}} 
& \cellcolor{gray!8}{\textcolor{blue}{$\uparrow$ 18.7\%}} \\

\midrule
\multirow{3}{*}{\makecell[c]{Qwen3-8B}}
& EAGLE3-64K 
& 3.50 & 1.66$\times$ 
& 3.79 & 1.59$\times$ 
& 3.39 & 1.68$\times$ 
& 3.09 & 1.48$\times$ 
& 3.01 & 1.27$\times$ 
& 1.54$\times$\\

&\cellcolor{gray!21}\textbf{\method} 
& \cellcolor{gray!21}\textbf{3.55} & \cellcolor{gray!21}\textbf{1.82}$\times$ 
& \cellcolor{gray!21}\textbf{3.87} & \cellcolor{gray!21}\textbf{1.77}$\times$ 
& \cellcolor{gray!21}\textbf{3.50} & \cellcolor{gray!21}\textbf{1.92}$\times$ 
& \cellcolor{gray!21}\textbf{3.57} & \cellcolor{gray!21}\textbf{1.96}$\times$ 
& \cellcolor{gray!21}\textbf{3.71} & \cellcolor{gray!21}\textbf{1.76}$\times$ 
& \cellcolor{gray!21}\textbf{1.85}$\times$ \\

&\cellcolor{gray!8}{$\bigtriangleup$ $(\uparrow, \%)$} 
& \cellcolor{gray!8}{} & \cellcolor{gray!8}{\textcolor{blue}{$\uparrow$ 9.6\%}} 
& \cellcolor{gray!8}{} & \cellcolor{gray!8}{\textcolor{blue}{$\uparrow$ 11.3\%}} 
& \cellcolor{gray!8}{} & \cellcolor{gray!8}{\textcolor{blue}{$\uparrow$ 14.3\%}} 
& \cellcolor{gray!8}{} & \cellcolor{gray!8}{\textcolor{blue}{$\uparrow$ 32.4\%}} 
& \cellcolor{gray!8}{} & \cellcolor{gray!8}{\textcolor{blue}{$\uparrow$ 38.6\%}} 
& \cellcolor{gray!8}{\textcolor{blue}{$\uparrow$ 20.2\%}} \\

\midrule
\multirow{3}{*}{\makecell[c]{Qwen3-14B}}
& EAGLE3-64K 
& 3.18 & 1.71$\times$ 
& 3.36 & 1.83$\times$ 
& 3.01 & 1.71$\times$ 
& 3.71 & 2.12$\times$ 
& 3.61 & 1.95$\times$ 
& 1.86$\times$\\

&\cellcolor{gray!21}\textbf{\method} 
& \cellcolor{gray!21}\textbf{3.27} & \cellcolor{gray!21}\textbf{1.91}$\times$ 
& \cellcolor{gray!21}\textbf{3.46} & \cellcolor{gray!21}\textbf{2.09}$\times$ 
& \cellcolor{gray!21}\textbf{3.22} & \cellcolor{gray!21}\textbf{2.08}$\times$ 
& \cellcolor{gray!21}\textbf{3.93} & \cellcolor{gray!21}\textbf{2.46}$\times$ 
& \cellcolor{gray!21}\textbf{3.94} & \cellcolor{gray!21}\textbf{2.33}$\times$ 
& \cellcolor{gray!21}\textbf{2.17}$\times$ \\

&\cellcolor{gray!8}{$\bigtriangleup$ $(\uparrow, \%)$} 
& \cellcolor{gray!8}{} & \cellcolor{gray!8}{\textcolor{blue}{$\uparrow$ 11.7\%}} 
& \cellcolor{gray!8}{} & \cellcolor{gray!8}{\textcolor{blue}{$\uparrow$ 14.2\%}} 
& \cellcolor{gray!8}{} & \cellcolor{gray!8}{\textcolor{blue}{$\uparrow$ 21.6\%}} 
& \cellcolor{gray!8}{} & \cellcolor{gray!8}{\textcolor{blue}{$\uparrow$ 16.0\%}} 
& \cellcolor{gray!8}{} & \cellcolor{gray!8}{\textcolor{blue}{$\uparrow$ 19.5\%}} 
& \cellcolor{gray!8}{\textcolor{blue}{$\uparrow$ 16.6\%}} \\

\bottomrule
\end{tabular}
}
\end{table*}

\paragraph{Long-context case.}
Table~\ref{tab:long_context_results} reports the decoding performance of \method on long-context summarization and code completion benchmarks.
\method achieves speedups from \textbf{1.76$\times$} to \textbf{3.37$\times$}, and consistently improves over EAGLE3-64K under the same verification budget.
The results reveal three key observations:
\noindent\textbf{(1) Lower Draft Cost under Long Context.}
Long-context decoding makes speculative drafting less forgiving.
Prior long-context SD studies show that KV-cache access and attention become central bottlenecks in this regime~\citep{tan2025specpv,yang2025longspec}.
As the prefix grows, both draft-side cache traffic and target-side tree verification become more expensive, so a low-confidence draft branch wastes more than it does in short-context generation.
EAGLE3-64K extends the draft module to long windows, but it still spends draft-model computation on a fixed static tree before the target model verifies it.
\method changes this allocation: pruning removes unreliable draft expansion first, and the released budget is reused by retrieval without increasing the final verification size.
Compared with EAGLE3-64K, \method improves the average speedup by \textbf{10.3\%} on LLaMA3.1-8B, \textbf{18.7\%} on Qwen3-4B, \textbf{20.2\%} on Qwen3-8B, and \textbf{16.6\%} on Qwen3-14B.
Meanwhile, the average MAT also increases by \textbf{2.6\%}, \textbf{6.6\%}, \textbf{8.5\%}, and \textbf{5.6\%}, respectively.
The simultaneous gain in speed and MAT shows that \method is not merely pruning the tree.
It reduces expensive autoregressive drafting while preserving useful candidate coverage through retrieval.

\noindent\textbf{(2) Context-Rich Retrieval Signals.}
Long contexts provide a much richer source of repeated phrases, document-specific entities, and local code patterns.
This directly benefits the adjacency matrix because the retrieval library is not a static prompt-only cache.
During prefilling and subsequent verification, \method updates the matrix with target-model predictions from the verified draft tree, including both accepted and rejected draft nodes.
Thus, a long prompt provides more initial evidence, and every tree verification step supplies many context-specific transitions for later root-centered retrieval.
The effect is especially clear on code completion tasks: on Qwen3-8B, \method improves speedup over EAGLE3-64K by \textbf{32.4\%} on LCC and \textbf{38.6\%} on RepoBench-P.
On Qwen3-14B, the corresponding gains remain \textbf{16.0\%} and \textbf{19.5\%}.
This is the regime where retrieval is most useful: repository-level code and long documents contain repeated local structures, so a GPU-resident transition matrix can recover high-quality continuations at much lower cost than another draft-model pass.

\noindent\textbf{(3) Kernel-Friendly Long-Context Deployment.}
On LLaMA3.1-8B, \method also outperforms long-context acceleration baselines by a clear margin.
It reaches a \textbf{3.22$\times$} average speedup, compared with \textbf{0.74$\times$} for MagicDec, \textbf{1.67$\times$} for TokenSwift, \textbf{1.81$\times$} for TriForce, and \textbf{2.92$\times$} for EAGLE3-64K.
The improvement over TokenSwift and TriForce reaches \textbf{92.8\%} and \textbf{77.9\%}, respectively.
The gap comes from a deployment-level difference.
Long-context methods that rely on sparse or partial KV drafting can reduce attention cost, but they often introduce extra cache selection, synchronization, or target-like draft computation.
\method keeps retrieval as a GPU matrix lookup, overlaps it with autoregressive drafting, and then packs the retrieved branch back into the same tree-attention verification path.
This makes the method compatible with optimized long-context attention implementations while avoiding an additional CPU-side retrieval path or an extra target forward pass.

\subsection{Ablation Studies}

\begin{figure}[t]
    \centering
    \includegraphics[width=1\linewidth]{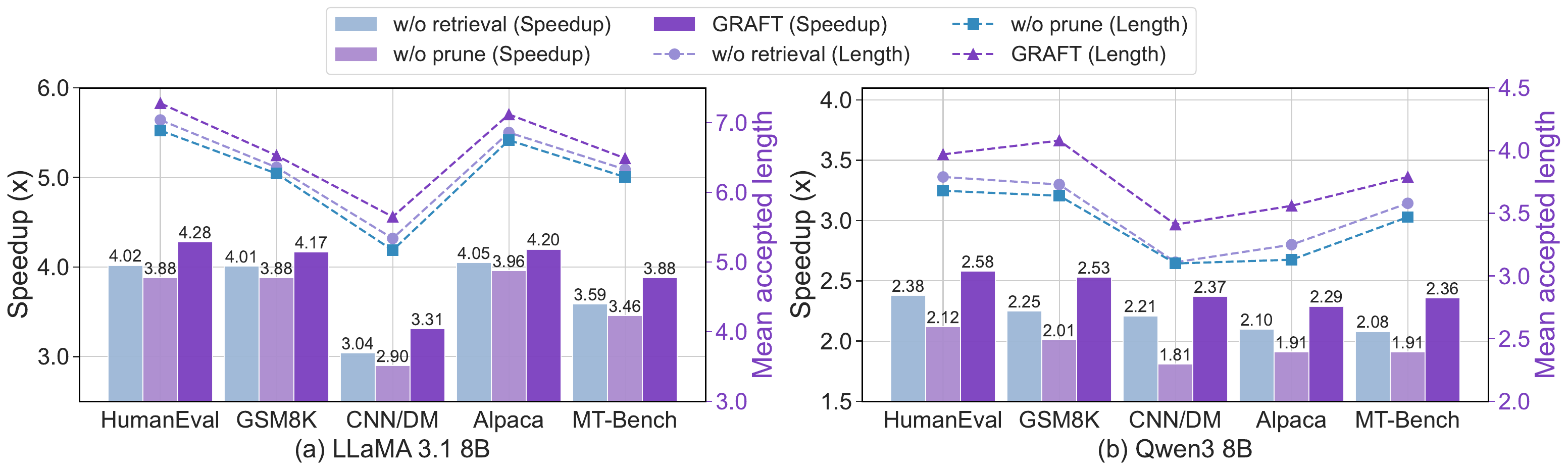}
    \vspace{-0.26in}
    \caption{\footnotesize{\textbf{Component analysis of \method.} Bars report speedup and dashed lines report MAT on LLaMA3.1-8B and Qwen3-8B. All variants keep the same total candidate budget. ``w/o retrieval'' keeps pruning but removes the grafted retrieval branch, testing whether pruning alone is sufficient. ``w/o prune'' uses a fixed draft/retrieval budget split, testing retrieval without confidence-adaptive budget release. The full \method couples the two: pruning releases budget for retrieval, and retrieval compensates pruning-induced candidate loss.}}
    \label{fig:component_ablation}
\end{figure}

\paragraph{Component Analysis}
We first isolate the two core components of \method: confidence pruning and retrieval grafting.
To ensure a fair comparison, all variants use the same total candidate budget, so the difference comes from how the budget is allocated rather than how many tokens are verified.
The \textit{w/o retrieval} variant keeps pruning but removes the retrieval graft, leaving the released budget uncompensated.
The \textit{w/o prune} variant keeps both draft-tree candidates and retrieved candidates, but fixes their budget split across all decoding steps instead of dynamically reallocating budget according to confidence.
As shown in Figure~\ref{fig:component_ablation}, removing either component degrades both speedup and MAT.
On LLaMA3.1-8B, \method improves the average speedup from \textbf{3.74$\times$} to \textbf{3.97$\times$} over \textit{w/o retrieval}, and from \textbf{3.62$\times$} to \textbf{3.97$\times$} over \textit{w/o prune}.
On Qwen3-8B, the gain is more pronounced, increasing the average speedup from \textbf{2.20$\times$} to \textbf{2.43$\times$} over \textit{w/o retrieval}, and from \textbf{1.95$\times$} to \textbf{2.43$\times$} over \textit{w/o prune}.
This shows that pruning and retrieval play complementary roles rather than acting as two independent optimizations.
Without retrieval, pruning saves latency but leaves over-pruning uncompensated, so MAT remains lower.
Without pruning, retrieval is restricted to a fixed small share of the tree and cannot receive enough budget when the draft model is uncertain.
A fixed split also cannot track request difficulty: it may spend retrieval budget when the base drafter is already reliable, yet leave insufficient retrieval capacity when draft expansion should be pruned.
The full method resolves both issues by letting pruning expose the right amount of retrieval budget and letting retrieval repair the coverage loss caused by pruning.
The full design therefore achieves higher speed and longer accepted sequences under the same verification budget.

\begin{table*}[t]
\caption{\footnotesize{\textbf{Ablation on context length scaling under the long-context setting.} We evaluate \method against EAGLE3-64K and existing baselines using four model configurations. Bold numbers denote the best speedup and MAT.}}
\label{tab:long_context_length}
\centering
\resizebox{\linewidth}{!}{%
\begin{tabular}{c l ccc ccc ccc ccc c}
\toprule
\multirow{2}{*}{Models} & \multirow{2}{*}{Methods} & \multicolumn{3}{c}{\textbf{4K}} & \multicolumn{3}{c}{\textbf{8K}} & \multicolumn{3}{c}{\textbf{16K}} & \multicolumn{3}{c}{\textbf{32K}}& \multirow{2}{*}{Avg.}\\
\cmidrule(lr){3-5} \cmidrule(lr){6-8} \cmidrule(lr){9-11} \cmidrule(lr){12-14} 
& & MAT & Speed & Speedup & MAT & Speed & Speedup & MAT & Speed & Speedup & MAT & Speed & Speedup \\

\midrule
\multirow{6}{*}{\makecell[c]{LLaMA3.1-8B}}
& MagicDec & 1.82 & 27.65 & 0.67$\times$ & 1.88 & 22.53 & 0.71$\times$ & 1.87 & 16.05 & 0.75$\times$ & 1.88 & 9.83 & 0.75$\times$ & 0.74$\times$\\
& TokenSwift & 1.62 & 56.54 & 1.37$\times$ & 1.63 & 45.06 & 1.42$\times$ & 1.65 & 32.94 & 1.58$\times$ & 1.73 & 21.23 & 1.75$\times$ & 1.53$\times$\\
& Triforce & 2.32 & 65.21 & 1.58$\times$ & 2.48 & 54.88 & 1.73$\times$ & 2.53 & 43.37 & 2.08$\times$ & 2.51 & 27.90 & 2.30$\times$ & 1.92$\times$\\
& EAGLE3-64K & 5.94 & 111.13 & 2.68$\times$ & 5.95 & 87.44 & 2.75$\times$ & 6.05 & 59.24 & 2.88$\times$ & 6.01 & 35.78 & 2.95$\times$ & 2.82$\times$\\
& \cellcolor{gray!21}\textbf{\method} & \cellcolor{gray!21}\textbf{6.03} & \cellcolor{gray!21}116.59 & \cellcolor{gray!21}\textbf{2.83}$\times$ & \cellcolor{gray!21}\textbf{6.04} & \cellcolor{gray!21}93.05 & \cellcolor{gray!21}\textbf{2.93}$\times$ & \cellcolor{gray!21}\textbf{6.21} & \cellcolor{gray!21}64.39 & \cellcolor{gray!21}\textbf{3.09}$\times$ & \cellcolor{gray!21}\textbf{6.20} & \cellcolor{gray!21}38.58 & \cellcolor{gray!21}\textbf{3.19}$\times$ & \cellcolor{gray!21}\textbf{3.01}$\times$\\
&\cellcolor{gray!8}{$\bigtriangleup$ $(\uparrow, \%)$} & \cellcolor{gray!8}{} & \cellcolor{gray!8}{} & \cellcolor{gray!8}{\textcolor{blue}{$\uparrow$ 5.6\%}} & \cellcolor{gray!8}{} & \cellcolor{gray!8}{} & \cellcolor{gray!8}{\textcolor{blue}{$\uparrow$ 6.5\%}} & \cellcolor{gray!8}{} & \cellcolor{gray!8}{} & \cellcolor{gray!8}{\textcolor{blue}{$\uparrow$ 7.3\%}} & \cellcolor{gray!8}{} & \cellcolor{gray!8}{} & \cellcolor{gray!8}{\textcolor{blue}{$\uparrow$ 8.1\%}} & \cellcolor{gray!8}{\textcolor{blue}{$\uparrow$ 6.7\%}} \\

\midrule
\multirow{3}{*}{\makecell[c]{Qwen3-4B}}
& EAGLE3-64K & 3.50 & 72.28 & 1.63$\times$ & 3.52 & 59.65 & 1.69$\times$ & 3.55 & 36.21 & 1.76$\times$ & 3.61 & 24.94 & 1.83$\times$ & 1.73$\times$\\
&\cellcolor{gray!21}\textbf{\method} & \cellcolor{gray!21}\textbf{3.57} & \cellcolor{gray!21}80.87 & \cellcolor{gray!21}\textbf{1.83}$\times$ & \cellcolor{gray!21}\textbf{3.59} & \cellcolor{gray!21}66.45 & \cellcolor{gray!21}\textbf{1.92}$\times$ & \cellcolor{gray!21}\textbf{3.68} & \cellcolor{gray!21}41.17 & \cellcolor{gray!21}\textbf{1.99}$\times$ & \cellcolor{gray!21}\textbf{3.73} & \cellcolor{gray!21}28.43 & \cellcolor{gray!21}\textbf{2.08}$\times$ & \cellcolor{gray!21}\textbf{1.96}$\times$\\
&\cellcolor{gray!8}{$\bigtriangleup$ $(\uparrow, \%)$} & \cellcolor{gray!8}{} & \cellcolor{gray!8}{} & \cellcolor{gray!8}{\textcolor{blue}{$\uparrow$ 12.3\%}} & \cellcolor{gray!8}{} & \cellcolor{gray!8}{} & \cellcolor{gray!8}{\textcolor{blue}{$\uparrow$ 13.6\%}} & \cellcolor{gray!8}{} & \cellcolor{gray!8}{} & \cellcolor{gray!8}{\textcolor{blue}{$\uparrow$ 13.1\%}} & \cellcolor{gray!8}{} & \cellcolor{gray!8}{} & \cellcolor{gray!8}{\textcolor{blue}{$\uparrow$ 13.7\%}} & \cellcolor{gray!8}{\textcolor{blue}{$\uparrow$ 13.3\%}} \\

\midrule
\multirow{3}{*}{\makecell[c]{Qwen3-8B}}
& EAGLE3-64K & 3.49 & 62.15 & 1.50$\times$ & 3.54 & 51.96 & 1.69$\times$ & 3.53 & 33.32 & 1.85$\times$ & 3.56 & 23.52 & 1.94$\times$ & 1.75$\times$\\
&\cellcolor{gray!21}\textbf{\method} & \cellcolor{gray!21}\textbf{3.57} & \cellcolor{gray!21}68.35 & \cellcolor{gray!21}\textbf{1.65}$\times$ & \cellcolor{gray!21}\textbf{3.63} & \cellcolor{gray!21}57.78 & \cellcolor{gray!21}\textbf{1.91}$\times$ & \cellcolor{gray!21}\textbf{3.67} & \cellcolor{gray!21}38.19 & \cellcolor{gray!21}\textbf{2.12}$\times$ & \cellcolor{gray!21}\textbf{3.68} & \cellcolor{gray!21}27.34 & \cellcolor{gray!21}\textbf{2.26}$\times$ & \cellcolor{gray!21}\textbf{1.99}$\times$\\
&\cellcolor{gray!8}{$\bigtriangleup$ $(\uparrow, \%)$} & \cellcolor{gray!8}{} & \cellcolor{gray!8}{} & \cellcolor{gray!8}{\textcolor{blue}{$\uparrow$ 10.0\%}} & \cellcolor{gray!8}{} & \cellcolor{gray!8}{} & \cellcolor{gray!8}{\textcolor{blue}{$\uparrow$ 13.0\%}} & \cellcolor{gray!8}{} & \cellcolor{gray!8}{} & \cellcolor{gray!8}{\textcolor{blue}{$\uparrow$ 14.6\%}} & \cellcolor{gray!8}{} & \cellcolor{gray!8}{} & \cellcolor{gray!8}{\textcolor{blue}{$\uparrow$ 16.5\%}} & \cellcolor{gray!8}{\textcolor{blue}{$\uparrow$ 13.7\%}} \\

\midrule
\multirow{3}{*}{\makecell[c]{Qwen3-14B}}
& EAGLE3-64K & 3.13 & 45.98 & 1.84$\times$ & 3.19 & 39.39 & 1.93$\times$ & 3.25 & 24.95 & 1.99$\times$ & 3.24 & 18.18 & 2.02$\times$ & 1.95$\times$\\
&\cellcolor{gray!21}\textbf{\method} & \cellcolor{gray!21}\textbf{3.22} & \cellcolor{gray!21}52.15 & \cellcolor{gray!21}\textbf{2.08}$\times$ & \cellcolor{gray!21}\textbf{3.28} & \cellcolor{gray!21}45.08 & \cellcolor{gray!21}\textbf{2.21}$\times$ & \cellcolor{gray!21}\textbf{3.37} & \cellcolor{gray!21}28.96 & \cellcolor{gray!21}\textbf{2.31}$\times$ & \cellcolor{gray!21}\textbf{3.43} & \cellcolor{gray!21}22.17 & \cellcolor{gray!21}\textbf{2.38}$\times$ & \cellcolor{gray!21}\textbf{2.25}$\times$\\
&\cellcolor{gray!8}{$\bigtriangleup$ $(\uparrow, \%)$} & \cellcolor{gray!8}{} & \cellcolor{gray!8}{} & \cellcolor{gray!8}{\textcolor{blue}{$\uparrow$ 13.0\%}} & \cellcolor{gray!8}{} & \cellcolor{gray!8}{} & \cellcolor{gray!8}{\textcolor{blue}{$\uparrow$ 14.5\%}} & \cellcolor{gray!8}{} & \cellcolor{gray!8}{} & \cellcolor{gray!8}{\textcolor{blue}{$\uparrow$ 16.1\%}} & \cellcolor{gray!8}{} & \cellcolor{gray!8}{} & \cellcolor{gray!8}{\textcolor{blue}{$\uparrow$ 17.8\%}} & \cellcolor{gray!8}{\textcolor{blue}{$\uparrow$ 15.4\%}} \\

\bottomrule
\end{tabular}
}
\end{table*}

\paragraph{Context length scaling.}
Table~\ref{tab:long_context_length} studies the effect of context length by grouping and truncating QMSum samples into 4K, 8K, 16K, and 32K buckets.
Since all buckets come from the same task, the comparison largely isolates the impact of context length from dataset-specific differences.
The trend is consistent with our design: as the context becomes longer, absolute decoding speed decreases for all methods due to heavier attention and KV-cache access, but the relative advantage of \method becomes larger.
Averaged across the four models, the speedup gain over EAGLE3-64K increases from \textbf{10.2\%} at 4K to \textbf{14.0\%} at 32K.
The trend is more pronounced on larger Qwen models: on Qwen3-8B, the gain grows from \textbf{10.0\%} to \textbf{16.5\%}; on Qwen3-14B, it grows from \textbf{13.0\%} to \textbf{17.8\%}.
This is intuitive for \method.
Longer contexts make fixed autoregressive drafting more costly, while also giving the online adjacency matrix more prompt evidence and more repeated local transitions to retrieve from.
The MAT improvement follows the same direction, e.g., Qwen3-14B increases from \textbf{3.22} at 4K to \textbf{3.43} at 32K, showing that retrieval improves accepted length rather than only reducing overhead.

\begin{table*}[t]
    \centering
    \begin{minipage}[c]{0.46\textwidth}
        \centering
        \caption{\footnotesize{Impact of warm-up rounds $K$ on storage and speedup (Qwen3-32B).}}
        \label{tab:prior_retrieval}
        \vspace{3pt}
        
        \small
        \setlength{\tabcolsep}{4pt} 
        \resizebox{\linewidth}{!}{ 
        \begin{tabular}{l c ccc}
        \toprule
        \multirow{2}{*}{Rounds ($K$)} & \multirow{2}{*}{Storage (MB)} & \multicolumn{3}{c}{Wall-time Speedup} \\
        \cmidrule(lr){3-5}
        & & HumanEval & GSM8K & MT-Bench \\
        \midrule
        0 & \cellcolor{white}0.0 & 2.37$\times$ & 2.71$\times$ & 2.31$\times$ \\
        1 & \cellcolor{white}0.13 & 2.41$\times$ & 2.75$\times$ & 2.36$\times$ \\
        3         & \cellcolor{gray!10}0.24 & 2.47$\times$ & 2.79$\times$ & 1.88$\times$ \\
        \textbf{5}        & \cellcolor{cyan!10}\textbf{0.37} & \textbf{2.51}$\times$ & \textbf{2.86}$\times$ & \textbf{2.46}$\times$ \\
        10        & \cellcolor{gray!15}0.63& 2.54$\times$ & 2.91$\times$ & 2.54$\times$ \\
        25        & \cellcolor{gray!20}1.45& 2.63$\times$ & 2.98$\times$ & 2.66$\times$ \\
        50        & \cellcolor{gray!25}2.32& 2.68$\times$ & 3.03$\times$ & 2.75$\times$ \\
        \bottomrule
        \end{tabular}
        }
    \end{minipage}\hfill 
    \begin{minipage}[c]{0.51\textwidth}
        \centering
        \includegraphics[width=\linewidth]{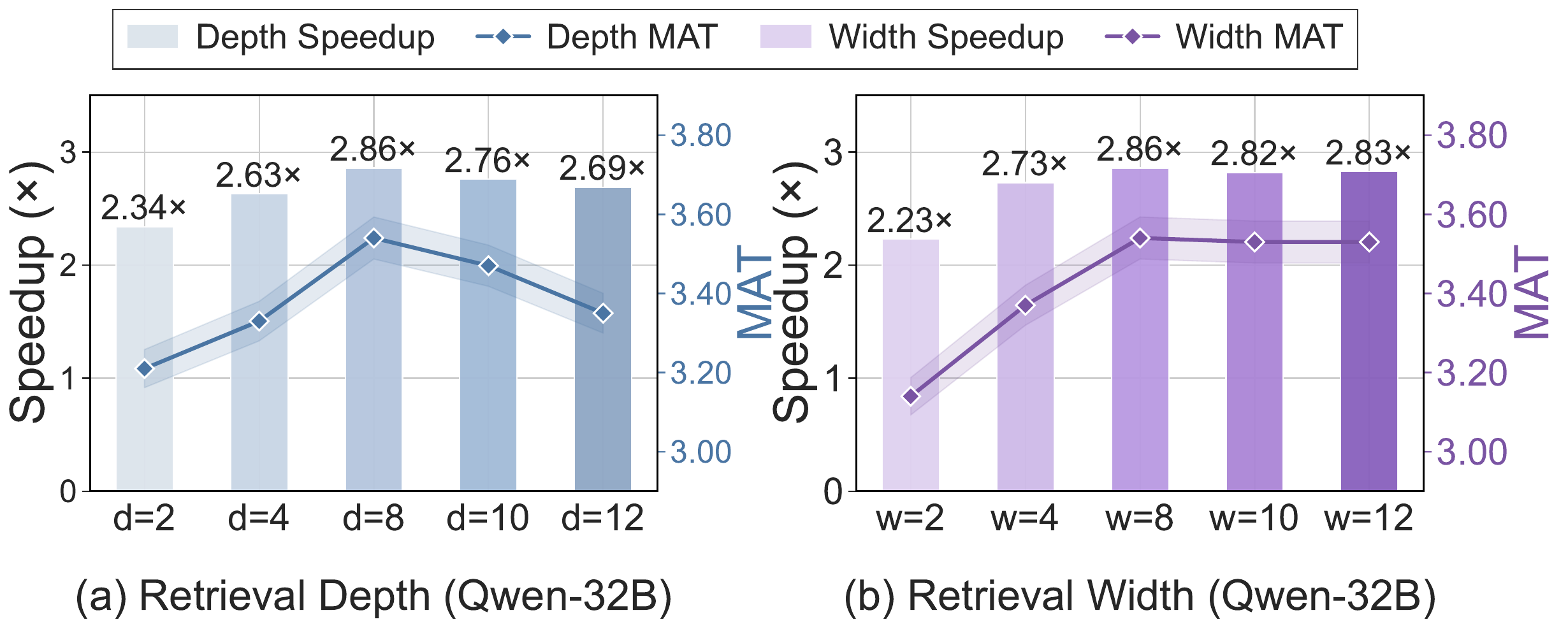}
        
        \vspace{-5pt} 
        \captionof{figure}{\footnotesize{\textbf{Retrieval template depth and width ablation.} Bars report speedup and lines report MAT on Qwen3-32B when varying the maximum retrieval depth $d$ or maximum retrieval width $w$. The default setting follows the base-tree envelope and balances long-chain continuation with sibling coverage.}}
        \label{fig:retrieval_depth_width}
    \end{minipage}
    
\end{table*}

\paragraph{Warm-up Start}
Warm-up serves two purposes in \method.
First, it initializes the GPU-resident adjacency matrix with prior successor patterns before online generation begins.
The warm-up data comes from ShareGPT, a separate conversation corpus commonly used for draft-model training, and is disjoint from all evaluation datasets, so it does not introduce data contamination.
This alleviates the cold-start limitation of prompt-only retrieval methods such as PLD~\citep{pld-saxena-2023}, whose retrieval source is sparse at the beginning of decoding.
As shown in Table~\ref{tab:prior_retrieval}, five warm-up rounds improve speedup from \textbf{2.37$\times$} to \textbf{2.51$\times$} on HumanEval, from \textbf{2.71$\times$} to \textbf{2.86$\times$} on GSM8K, and from \textbf{2.31$\times$} to \textbf{2.46$\times$} on MT-Bench, while requiring only \textbf{0.37 MB} storage.
More warm-up rounds can bring further gains, but they also increase initialization time and storage.
Since the adjacency matrix stores compact top-$k$ successors directly on GPU, the memory overhead remains small even at 50 rounds (\textbf{2.32 MB}); nevertheless, we use five rounds by default to preserve real-time inference.
Second, warm-up calibrates the pruning locations and gating thresholds.
Appendix~\ref{app:threshold_generalization} shows that the calibrated thresholds are stable across datasets and that fixed thresholds transferred from HumanEval only incur minor degradation, indicating that the warm-up calibration generalizes well across tasks.

\paragraph{Effect of Retrieval Depth and Width}
We further analyze how the retrieval template should allocate its fixed budget between depth and width.
By default, the maximum retrieval depth and width are aligned with the original base tree.
As detailed in Appendix~\ref{app:retrieval_template}, the default template is a Token-Recycling-inspired static and unbalanced tree: high-rank successors are assigned more children and a deeper greedy chain, while lower-rank successors keep limited width.
This makes the retrieved branch compatible with the tree shape used during warm-up and online updates: the adjacency matrix is initialized and refreshed from transitions produced by the same tree topology, and even steps without pruning can still provide useful update signals.
During decoding, \method does not always use the full retrieval template.
Instead, it selects a stage-specific template according to the pruning depth, so shallow pruning receives more retrieval slots while deeper pruning preserves more draft nodes.

Figure~\ref{fig:retrieval_depth_width} shows that both overly narrow and overly aggressive templates are suboptimal.
When the retrieval depth is too small, e.g., $d=2$, the retrieved branch can only provide short continuations, yielding \textbf{2.34$\times$} speedup and \textbf{3.21} MAT.
Increasing the depth to $d=8$ improves both metrics to \textbf{2.86$\times$} and \textbf{3.54}, but pushing the chain deeper causes the benefit to decline because distant retrieved tokens become less reliable and consume budget that could have covered alternative branches.
The width ablation follows the same pattern.
Small width under-explores plausible successors, while excessive width spends too much budget on low-rank candidates; the best balance again appears around $w=8$, where \method reaches \textbf{2.86$\times$} speedup and \textbf{3.54} MAT.
These results support our template design: retrieval needs enough width and depth to compensate pruning, but under a fixed verification budget the shape should remain close to the base-tree envelope rather than becoming too chain-like or too flat.

\begin{table*}[t]
\caption{\footnotesize{\textbf{Performance under stochastic decoding ($T{=}1$) on five benchmarks}. Performance comparison between \method and existing baselines across diverse model configurations. Bold numbers denote the best speedup and MAT.}}
\label{tab:temperature_ablation}
\centering
\resizebox{\linewidth}{!}{%
\begin{tabular}{c l cc cc cc cc cc c}
\toprule
\multirow{2}{*}{Models} & \multirow{2}{*}{Methods} & \multicolumn{2}{c}{HumanEval} & \multicolumn{2}{c}{GSM8K} & \multicolumn{2}{c}{CNN/DM} & \multicolumn{2}{c}{Alpaca} & \multicolumn{2}{c}{MT-Bench}&\multirow{2}{*}{Avg.}\\
\cmidrule(lr){3-4} \cmidrule(lr){5-6} \cmidrule(lr){7-8} \cmidrule(lr){9-10} \cmidrule(lr){11-12} & &MAT & Speedup & MAT & Speedup & MAT & Speedup & MAT & Speedup & MAT & Speedup\\

\midrule
\multirow{4}{*}{\makecell[c]{LLaMA3.1-8B}}
& ECHO            & 6.33 & 3.68$\times$ & 5.34 & 3.53$\times$ & 4.32 & 2.71$\times$ & 6.17 & 3.51$\times$ & 5.34 & 3.18$\times$ & 3.32$\times$\\
& EAGLE3          & 6.31 & 3.55$\times$ & 5.45 & 3.36$\times$ & 4.67 & 2.57$\times$ & 6.23 & 3.34$\times$ & 5.51 & 3.01$\times$ & 3.17$\times$\\
&\cellcolor{gray!21}\textbf{\method} & \cellcolor{gray!21}\textbf{6.41} & \cellcolor{gray!21}\textbf{3.81}$\times$ & \cellcolor{gray!21}\textbf{5.51} & \cellcolor{gray!21}\textbf{3.61}$\times$ & \cellcolor{gray!21}\textbf{4.83} & \cellcolor{gray!21}\textbf{2.83}$\times$ & \cellcolor{gray!21}\textbf{6.31} & \cellcolor{gray!21}\textbf{3.64}$\times$ & \cellcolor{gray!21}\textbf{5.57} & \cellcolor{gray!21}\textbf{3.25}$\times$ & \cellcolor{gray!21}\textbf{3.43}$\times$ \\

&\cellcolor{gray!8}{$\bigtriangleup$ $(\uparrow, \%)$} & \cellcolor{gray!8}{} & \cellcolor{gray!8}{\textcolor{blue}{$\uparrow$ 7.3\%}} & \cellcolor{gray!8}{} & \cellcolor{gray!8}{\textcolor{blue}{$\uparrow$ 7.4\%}} & \cellcolor{gray!8}{} & \cellcolor{gray!8}{\textcolor{blue}{$\uparrow$ 10.1\%}} & \cellcolor{gray!8}{} & \cellcolor{gray!8}{\textcolor{blue}{$\uparrow$ 9.0\%}} & \cellcolor{gray!8}{} & \cellcolor{gray!8}{\textcolor{blue}{$\uparrow$ 8.0\%}} & \cellcolor{gray!8}{\textcolor{blue}{$\uparrow$ 8.2\%}} \\

\midrule
    \multirow{4}{*}{\makecell[c]{Qwen3-8B}}
& ECHO            & 3.21 & 2.21$\times$ & 3.21 & 2.28$\times$ & 2.67 & 1.99$\times$ & 2.93 & 2.01$\times$ & 3.28 & 2.04$\times$ & 2.11$\times$\\
& EAGLE3          & 3.23 & 1.87$\times$ & 3.35 & 1.97$\times$ & 2.83 & 1.71$\times$ & 2.98 & 1.83$\times$ & 3.34 & 1.89$\times$ & 1.85$\times$\\
&\cellcolor{gray!21}\textbf{\method} & \cellcolor{gray!21}\textbf{3.32} & \cellcolor{gray!21}\textbf{2.37}$\times$ & \cellcolor{gray!21}\textbf{3.43} & \cellcolor{gray!21}\textbf{2.43}$\times$ & \cellcolor{gray!21}\textbf{2.91} & \cellcolor{gray!21}\textbf{2.17}$\times$ & \cellcolor{gray!21}\textbf{3.06} & \cellcolor{gray!21}\textbf{2.17}$\times$ & \cellcolor{gray!21}\textbf{3.41} & \cellcolor{gray!21}\textbf{2.18}$\times$ & \cellcolor{gray!21}\textbf{2.26}$\times$ \\

&\cellcolor{gray!8}{$\bigtriangleup$ $(\uparrow, \%)$} & \cellcolor{gray!8}{} & \cellcolor{gray!8}{\textcolor{blue}{$\uparrow$ 26.7\%}} & \cellcolor{gray!8}{} & \cellcolor{gray!8}{\textcolor{blue}{$\uparrow$ 23.4\%}} & \cellcolor{gray!8}{} & \cellcolor{gray!8}{\textcolor{blue}{$\uparrow$ 26.9\%}} & \cellcolor{gray!8}{} & \cellcolor{gray!8}{\textcolor{blue}{$\uparrow$ 18.6\%}} & \cellcolor{gray!8}{} & \cellcolor{gray!8}{\textcolor{blue}{$\uparrow$ 15.3\%}} & \cellcolor{gray!8}{\textcolor{blue}{$\uparrow$ 22.2\%}} \\

\bottomrule
\end{tabular}
}
\vspace{-0.1in}
\end{table*}

\begin{figure*}[t]
    \centering
    \includegraphics[width=0.98\linewidth]{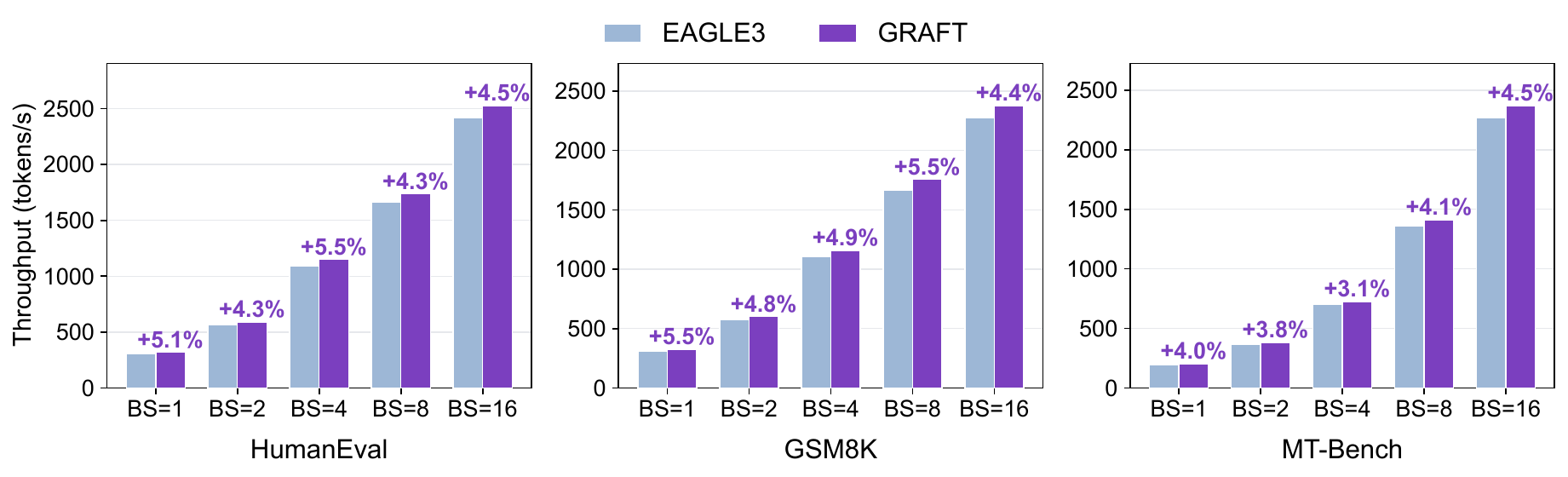}
    \vspace{-0.15in}
    \caption{\footnotesize{\textbf{High-batch serving sanity check in \texttt{SGLang} on Qwen3-8B.} We compare EAGLE3 and \method across batch sizes on three benchmarks. Bold numbers denote the best MAT and throughput.}}
    \label{fig:sglang_high_batch}
\end{figure*}

\subsection{Discussions}

\paragraph{Temperature sampling.}
Table~\ref{tab:temperature_ablation} evaluates \method under stochastic decoding with temperature $T=1$.
Higher temperature makes the target distribution less concentrated, which usually lowers draft confidence and increases the chance of early rejection.
Even in this harder setting, \method remains clearly better than the baselines.
On LLaMA3.1-8B, \method reaches a \textbf{3.43$\times$} average speedup, outperforming EAGLE3 by \textbf{8.2\%} and ECHO by \textbf{3.3\%}.
On Qwen3-8B, the advantage is larger: \method achieves \textbf{2.26$\times$} average speedup, improving over EAGLE3 by \textbf{22.2\%} and ECHO by \textbf{7.1\%}.
The MAT is also consistently the highest across all five benchmarks.
These results indicate that pruning and retrieval grafting are not tied to greedy decoding, and remain effective when token probabilities become flatter.

\paragraph{High-Batch Scalability}
\label{sec:high_batch_scalability}
Figure~\ref{fig:sglang_high_batch} provides a Qwen3-8B sanity check for \method in \texttt{SGLang} serving across batch sizes from 1 to 16.
In this setting, the benefit of dynamic depth is hard to realize directly, because batched serving favors fixed CUDA graphs and batch-aligned verification shapes.
Changing the token budget of each request would require intrusive scheduler and kernel adaptations.
Therefore, our high-batch implementation keeps the same gating criterion but applies it as pruning: low-confidence draft nodes are removed from the static tree, while the released slots are filled by a retrieval tree rooted at the current token.
In other words, the high-concurrency version of \method keeps the per-request verification budget unchanged, and grafts retrieval candidates onto the pruned base tree instead of dynamically changing tree depth.
Despite this lightweight implementation, \method consistently improves both throughput and MAT over EAGLE3.
Across all evaluated batch sizes and benchmarks, \method reports higher throughput and higher MAT than EAGLE3, showing that retrieval grafting remains useful even when the serving system keeps a fixed verification budget.
We note that this \texttt{SGLang} implementation does not yet fully optimize retrieval kernels or scheduling, so the high-concurrency potential of grafted retrieval remains an important future direction.
Detailed throughput numbers and implementation details are provided in Appendix~\ref{app:high_concurrency_details}.

\paragraph{More Discussions.}
Due to space constraints, we provide additional derivations, system details, and supplementary analyses in the Appendix.
Appendix~\ref{app:theory} gives the theoretical analysis of pruning regret, retrieval coverage, speedup conditions, and lossless verification.
Appendix~\ref{app:retrieval_template} details the retrieval template design, while Appendix~\ref{app:threshold_generalization} reports cross-task threshold generalization.
Appendix~\ref{app:high_concurrency_details} provides the Qwen3-8B high-batch \texttt{SGLang} implementation and throughput table, and Appendix~\ref{app:runtime_overhead} profiles the runtime overhead of retrieval, merge, rebuild, verification, and online matrix updates.
We further discuss the DFlash demo implementation in Appendix~\ref{app:dflash_demo} and provide a broader related-work discussion in Appendix~\ref{app:detailed_related_work}.

\subsection{Future Works}
\begin{table*}[t]
\caption{\footnotesize{\textbf{Performance over DFLASH on five benchmarks}. Performance comparison between \method and existing baselines across diverse model configurations. Bold numbers denote the best speedup and MAT.}}
\label{tab:dflash_ablation}
\centering
\resizebox{\linewidth}{!}{%
\begin{tabular}{c l cc cc cc cc cc c}
\toprule
\multirow{2}{*}{Models} 
& \multirow{2}{*}{Methods} 
& \multicolumn{2}{c}{HumanEval} 
& \multicolumn{2}{c}{GSM8K} 
& \multicolumn{2}{c}{CNN/DM} 
& \multicolumn{2}{c}{Alpaca} 
& \multicolumn{2}{c}{MT-Bench} 
& \multirow{2}{*}{Avg.} \\
\cmidrule(lr){3-4} 
\cmidrule(lr){5-6} 
\cmidrule(lr){7-8} 
\cmidrule(lr){9-10} 
\cmidrule(lr){11-12}
& & MAT & Speedup & MAT & Speedup & MAT & Speedup & MAT & Speedup & MAT & Speedup & \\

\midrule
\multirow{5}{*}{\makecell[c]{Qwen3-8B}}
& EAGLE3(60)          
& 3.23 & 1.87$\times$ 
& 3.35 & 1.97$\times$ 
& 2.83 & 1.71$\times$ 
& 2.98 & 1.83$\times$ 
& 3.34 & 1.89$\times$ 
& 1.85$\times$ \\

& \method-EAGLE3(60)          
& 3.32 & 2.37$\times$ 
& 3.43 & 2.43$\times$ 
& 2.91 & 2.17$\times$ 
& 3.06 & 2.17$\times$ 
& 3.41 & 2.18$\times$ 
& 2.26$\times$ \\

& DFLASH(16)          
& 6.43 & 5.07$\times$ 
& 6.45 & 4.97$\times$ 
& 3.83 & 2.23$\times$ 
& 3.11 & 2.05$\times$ 
& 4.39 & 2.69$\times$ 
& 3.40$\times$ \\

& \cellcolor{gray!21}\textbf{\method-DFLASH(16)}          
& \cellcolor{gray!21}\textbf{6.61} & \cellcolor{gray!21}\textbf{5.37}$\times$ 
& \cellcolor{gray!21}\textbf{6.64} & \cellcolor{gray!21}\textbf{5.25}$\times$ 
& \cellcolor{gray!21}\textbf{4.23} & \cellcolor{gray!21}\textbf{2.68}$\times$ 
& \cellcolor{gray!21}\textbf{3.34} & \cellcolor{gray!21}\textbf{2.33}$\times$ 
& \cellcolor{gray!21}\textbf{4.54} & \cellcolor{gray!21}\textbf{2.93}$\times$ 
& \cellcolor{gray!21}\textbf{3.71}$\times$ \\

& \cellcolor{gray!8}{$\bigtriangleup$ over DFLASH $(\uparrow, \%)$} 
& \cellcolor{gray!8}{} & \cellcolor{gray!8}{\textcolor{blue}{$\uparrow$ 5.9\%}} 
& \cellcolor{gray!8}{} & \cellcolor{gray!8}{\textcolor{blue}{$\uparrow$ 5.6\%}} 
& \cellcolor{gray!8}{} & \cellcolor{gray!8}{\textcolor{blue}{$\uparrow$ 20.2\%}} 
& \cellcolor{gray!8}{} & \cellcolor{gray!8}{\textcolor{blue}{$\uparrow$ 13.7\%}} 
& \cellcolor{gray!8}{} & \cellcolor{gray!8}{\textcolor{blue}{$\uparrow$ 8.9\%}} 
& \cellcolor{gray!8}{\textcolor{blue}{$\uparrow$ 9.1\%}} \\

\bottomrule
\end{tabular}
}
\end{table*}

\paragraph{Grafting for block drafters.}
An immediate direction is to extend \method to block drafters such as DFlash~\citep{chen2026dflash}.
DFlash drafts a block of tokens in one parallel diffusion pass, which substantially reduces drafting latency compared with autoregressive tree drafting.
However, even with a block size of 16, its average accepted length is still typically around 5 to 7 tokens, and the gain over EAGLE-3 can be limited on harder chat benchmarks such as MT-Bench.
We suspect that this limitation comes from a structural mismatch: DFlash performs bidirectional denoising within a block, whereas the target model verifies tokens under the original autoregressive factorization.
Consequently, some positions in the drafted block may have low verification utility even though the entire block is generated efficiently.

This observation makes DFlash a natural testbed for grafting.
In particular, the DFlash implementation produces token-level draft logits before target verification, which allows us to estimate the confidence of each drafted token and identify low-utility positions for pruning. 
Our observations further suggest that confidence derived from DFlash draft logits is positively correlated with token acceptance, making confidence-guided block-level pruning a natural next step.
Instead of changing the block size or increasing the serving budget, one can prune low-confidence diffusion tokens and graft retrieval-based candidates into the released positions.
This preserves the parallel drafting advantage of DFlash while using retrieval to compensate for the block modeling mismatch, especially in scenarios where local repetition or prompt evidence is strong.

To examine this direction, we conduct a preliminary test by applying \method on top of DFlash.
As shown in Table~\ref{tab:dflash_ablation}, grafting consistently improves DFlash across all five benchmarks on Qwen3-8B.
Specifically, \method-DFLASH increases the average speedup from $3.40\times$ to $3.71\times$, corresponding to a relative improvement of $9.1\%$.
The gains are particularly clear on CNN/DM and Alpaca, where the speedup improves from $2.23\times$ to $2.68\times$ and from $2.05\times$ to $2.33\times$, respectively.
These preliminary results suggest that the computation saved by pruning low-confidence block tokens can be effectively repurposed for retrieval-based grafting, allowing \method to partially recover the acceptance rate lost by block-level pruning without changing the serving budget.
We provide the demo implementation and \texttt{Graft(TAIL)} strategy in Appendix~\ref{app:dflash_demo}.

\paragraph{Hybrid drafting for RL rollouts.}
Another promising scenario is large scale RL rollout, especially group sampling settings such as GRPO~\citep{shao2024deepseekmath}.
In this setting, maintaining a separate static draft model is costly and fragile, because the target policy keeps changing during training and the draft distribution can quickly drift away.
Pure retrieval methods, such as Seer~\citep{qin2025seer}, avoid this overhead by mining local n-gram overlap from responses in the same rollout group, but their hit rate can be unstable when high temperature sampling or task difficulty reduces cross response similarity.
Recent rollout acceleration systems increasingly treat speculative decoding, including MTP heads, as a stable primitive for RL generation~\citep{iso2026rlspec,liu2025specrl,qin2025seer,wang2025reinforcement}.
Multi token prediction (MTP)~\citep{gloeckle2024bmtp} is attractive here because it is tied to the current target model and does not depend on cross response overlap.
However, its long range prediction quality can still be affected by rollout distribution shift, and its acceleration ceiling is limited when the accepted prefix remains short.
This suggests a natural extension of \method: combine an MTP branch with an online n-gram retrieval branch in a grafted draft tree.
The MTP branch uses the target model's internal features to provide a stable multi token prediction path, so it gives a reliable throughput floor even during cold start or low overlap rollout groups.
The n-gram branch, in contrast, dynamically aggregates group generation history and can retrieve long high quality continuations when rollout trajectories share reasoning templates, tool use patterns, or local solution structures.
In such a hybrid design, MTP stabilizes the lower bound while retrieval raises the upper bound.
Confidence pruning can then decide which MTP positions are unreliable and graft retrieved candidates into those slots.
This is particularly attractive for Agentic RL, where tail requests often dominate iteration time and where a robust acceleration base must avoid extra model memory while still exploiting the strong local repetition created by grouped rollouts.

\section{Related Works}
\label{sec:related_work}

\paragraph{Speculative Decoding}
Speculative decoding (SD) accelerates LLM inference through the draft-then-verify paradigm, where candidate tokens are proposed by a fast drafter and verified in parallel by the target model~\citep{leviathan2023fast,chen2023accelerating}.
Existing methods improve this pipeline from several directions: separate lightweight draft models~\citep{leviathan2023fast,chen2023accelerating}, auxiliary-head or self-speculative drafters such as Medusa and EAGLE~\citep{cai2024medusa,li2025eagle}, multi-token prediction~\citep{gloeckle2024bmtp}, and tree-based verification that explores multiple continuations in one target forward~\citep{miao2024specinfer}.
These works mainly ask how to generate more draft candidates.
\method instead focuses on how to spend a fixed candidate budget: low-confidence draft branches are pruned, and the released slots are filled with retrieval candidates.

\paragraph{Retrieval-based Speculative Decoding}
Retrieval-based SD is appealing because it can propose candidate tokens with little additional draft-model computation while preserving the lossless verification guarantee.
Representative methods retrieve candidates from prompt-local matches, external datastores, cached transitions, or distributional matching logic, including Lookahead Decoding, PLD, REST, Token Recycling, LogitSpec, and SAMD~\citep{lookahead-fu-2024,pld-saxena-2023,rest-he-2024,token-recycle-luo-2024,liu2025logitspec,hu2025sam,shen2026double}.
However, most retrieval methods are used as standalone drafters and remain tied to CPU-side lookup, prompt-local evidence, or task-specific data structures.
As a result, their speedup ceiling is often limited, and their adaptation to large models, high-concurrency serving, and tree-style verification is still underexplored.
\method uses retrieval in a different role.
Rather than replacing the parametric tree drafter, it grafts retrieval branches into the budget released by pruning, stores token transitions in a GPU-friendly adjacency matrix, and packs the retrieved candidates back into the same tree-attention verification path.
This makes retrieval better aligned with industrial serving requirements.

\paragraph{Dynamic Token Tree}
Tree-based SD expands multiple candidate paths to increase the accepted length~\citep{miao2024specinfer}.
Static tree methods such as EAGLE-3 use fixed structures with low overhead, but inevitably spend budget on uncertain deep branches~\citep{li2025eagle}.
Dynamic token tree methods adapt the topology according to token probability or confidence~\citep{li2024eagle2,zhang2024adaeagle,brown2024dynamic,hu2026echo,zhang2024draft}.
They improve budget utilization, but dense control can introduce misjudgment accumulation and irregular request shapes that are hard to deploy efficiently in serving kernels.
\method keeps the efficient static-tree verification interface while introducing lightweight pruning and retrieval grafting.
The final verification budget remains consistent with the original tree, avoiding complex ragged-batch adaptation.

\paragraph{Speculative Decoding for Long Context.}
As context length increases, large language model inference becomes heavily memory-bound due to escalating KV-cache traffic and attention overhead, a bottleneck that inherently degrades speculative decoding. Recent methods attempt to mitigate this issue through sparse or partial KV caching, hierarchical speculation, and long-context draft adaptation~\citep{sun2024triforce, yang2025longspec, wu2025tokenswift}. However, these solutions primarily optimize the KV-cache dimension and lack tight integration with fixed-budget tree verification. Furthermore, frameworks such as SpecPV~\citep{tan2025specpv} and the retrieval-accelerated RAPID~\citep{chen2025rapid} rely on lossy approximations, thereby sacrificing the exact equivalence guarantee of standard speculative decoding. In contrast, \method instantiates grafting on a long-context tree drafter and uses retrieval to exploit richer local transition evidence while strictly maintaining efficient, lossless tree verification.

\section{Conclusion}
\label{sec:conclusion}

This paper investigates whether the budget released by dynamic tree pruning can be reused to break the latency-MAT trade-off in tree-based speculative decoding.
We approach the question from two directions: a preliminary analysis showing that pruning-only dynamic trees remain bounded by the dense base tree, and a hybrid construction that uses retrieval to refill the slots made available by pruning.

Our analysis suggests that pruning should not be viewed only as candidate removal.
It is also a budget allocation decision.
When low-confidence draft branches are pruned, spending the saved slots on the same uncertain drafter often yields limited returns.
Retrieval offers a complementary source of candidates because it can reuse prompt, history, and verification signals with little draft-model computation.
The key is to use retrieval as compensation rather than as a standalone drafter: root insertion competes with strong draft-tree candidates, and tail insertion depends on a fully accepted draft prefix, while grafting places retrieved candidates exactly where pruning creates space.

The experiments support this picture across short-context, long-context, and high-batch settings.
\method improves speed while maintaining or increasing MAT, showing that it is not merely shrinking the tree.
In short-context benchmarks, it reaches up to \textbf{5.41$\times$} speedup and improves over EAGLE-3 most strongly on large models such as Qwen3-235B.
In long-context generation, the benefit becomes more pronounced because tree drafting becomes more expensive and the retrieval matrix receives richer local transition evidence.
The \texttt{SGLang} results further show that grafting remains useful under fixed per-request budgets, where fully dynamic tree depth is difficult to deploy.
Beyond the tree drafter used in our main implementation, our preliminary study on DFlash suggests that the same compensation principle may also extend to block drafting paradigms.
By pruning low-confidence diffusion tokens and grafting retrieval-based candidates into the released positions, \method improves the average DFlash speedup from $3.40\times$ to $3.71\times$ on Qwen3-8B, offering initial evidence that grafting can help compensate for the mismatch between bidirectional block denoising and autoregressive verification.

There are still limits to this first instantiation.
Retrieval is most effective when the prompt or generation history contains useful local transition structure, and our current high-concurrency implementation does not yet fully optimize retrieval kernels or scheduler-level placement.
The DFlash exploration is also preliminary: block drafters differ from tree drafters in topology and confidence calibration, and a more systematic design of confidence-guided block pruning and retrieval placement remains future work.
These limitations point to a broader direction: speculative decoding should separate \emph{which} draft candidates are worth pruning from \emph{how} the released budget should be repopulated.
\method takes a step in this direction by showing that drafting less and retrieving more can form a practical, lossless, and serving-friendly path beyond the pruning frontier.

\bibliographystyle{iclr2026_conference}
\bibliography{references/references}

\clearpage
\appendix
\section*{Appendix Contents}
\label{app:contents}

\begin{itemize}[leftmargin=*, itemsep=1pt, topsep=2pt]
    \item Appendix~\ref{app:procedure}: generation procedure, pseudocode, and retrieval-template details for \method.
    \item Appendix~\ref{app:theory}: theoretical analysis of pruning regret, retrieval coverage, speedup condition, and lossless verification.
    \item Appendix~\ref{sec:appendix_eval}: evaluation details, baseline settings, and cross-task threshold generalization.
    \item Appendix~\ref{app:high_concurrency_details}: high-concurrency implementation details and \texttt{SGLang} throughput table.
    \item Appendix~\ref{app:runtime_overhead}: runtime profiling and overhead breakdown.
    \item Appendix~\ref{app:dflash_demo}: DFlash demo implementation and \texttt{Graft(TAIL)} configuration.
    \item Appendix~\ref{app:detailed_related_work}: detailed discussion of related work.
\end{itemize}

\section{Procedure and Pseudocode}
\label{app:procedure}

This section summarizes the generation pipeline used by \method.
The implementation has three coupled parts: (i) confidence-based pruning over a fixed-budget draft tree, (ii) retrieval grafting from a GPU-resident adjacency matrix, and (iii) online updates of this matrix using target-model verification logits.
The final output tree is always packed into the standard tree-attention verification path, so retrieval changes only the proposal set and not the target-model acceptance rule.

\paragraph{Stage-adaptive budget.}
Let $K_{\max}$ be the candidate budget of the original static base tree.
\method assigns this budget according to the pruning stage $s$:
\begin{equation}
K_{\max}=K^{\mathrm{draft}}_s+K^{\mathrm{ret}}_s .
\label{eq:app_stage_budget}
\end{equation}
In our implementation, pruning at checkpoints $d_0$, $d_1$, and $d_5$ keeps $8$, $24$, and $40$ draft-tree nodes, respectively, and allocates the remaining slots to retrieval.
If no pruning is triggered, \method degenerates to the original base tree with no retrieval grafting.
For each stage, we precompute a retrieval template with the same maximum width and depth as the corresponding base-tree shape.
This makes initialization and online updates compatible with the tree topology: even decoding rounds without pruning still refresh useful transition rows in the adjacency matrix.

\paragraph{GPU-resident transition matrix.}
\method maintains an adjacency matrix $\mathcal{M}\in\mathcal{V}^{|\mathcal{V}|\times k}$, where each row stores the top-$k$ likely successors of a token.
During prefill and every verification round, target logits from all verified tree nodes are used to update this matrix:
\begin{equation}
\mathcal{M}[\tilde{x}_i]\leftarrow \operatorname{argtop}k(\tilde{p}_{i+1}),
\label{eq:app_adj_update}
\end{equation}
where $\tilde{x}_i$ is a verified draft token and $\tilde{p}_{i+1}$ is the target next-token distribution at that node.
This update uses both accepted and rejected candidates, since both have already been evaluated by the target model.

\begin{algorithm}[t]
\caption{\method Generation}
\label{alg:graft_generation}
\begin{algorithmic}[1]
\STATE \textbf{Input:} prefix $x_{1:t}$, target model $p$, tree drafter $q$, budget $K_{\max}$, adjacency matrix $\mathcal{M}$
\STATE Ensure $\mathcal{M}$ is allocated on GPU; initialize empty rows if needed
\STATE Precompute retrieval-template caches for pruning stages $s\in\{d_0,d_1,d_5\}$
\STATE Run target prefill on $x_{1:t}$ and initialize the first draft tree
\STATE Update $\mathcal{M}$ with prefill logits using Eq.~\ref{eq:app_adj_update}
\WHILE{the sequence has not reached an end condition}
    \STATE Verify the current hybrid tree with the target model and obtain node logits
    \STATE Update $\mathcal{M}$ using target logits from all verified tree nodes
    \STATE Select the accepted path with the standard speculative decoding acceptance rule
    \STATE Append accepted tokens and update target KV cache, draft hidden state, and current root
    \STATE Build the next hybrid tree with Algorithm~\ref{alg:draft_and_graft}
\ENDWHILE
\STATE \textbf{Output:} generated sequence
\end{algorithmic}
\end{algorithm}

\begin{algorithm}[t]
\caption{Draft-and-Graft Tree Construction}
\label{alg:draft_and_graft}
\begin{algorithmic}[1]
\STATE \textbf{Input:} current root token $x_t$, draft hidden state, adjacency matrix $\mathcal{M}$, thresholds $\{\tau_d\}$, budget $K_{\max}$
\STATE Run the root draft step and compute the root confidence
\IF{root confidence is below $\tau_0$}
    \STATE Set pruning stage $s=d_0$ and prune the draft tree at this depth
\ELSE
    \FOR{each draft expansion depth $d$}
        \STATE Expand one draft layer and rank nodes by cumulative path score
        \IF{$d$ is a pruning checkpoint and confidence is below $\tau_d$}
            \STATE Set pruning stage $s=d$ and prune the draft tree at this depth
        \ENDIF
    \ENDFOR
\ENDIF
\STATE Resolve $(K^{\mathrm{draft}}_s,K^{\mathrm{ret}}_s)$ according to Eq.~\ref{eq:app_stage_budget}
\STATE Keep the top $K^{\mathrm{draft}}_s$ draft nodes by cumulative path score
\IF{no pruning is triggered or $K^{\mathrm{ret}}_s=0$}
    \STATE Use the retained draft nodes as the final proposal tree
\ELSE
    \STATE Traverse the cached retrieval template from root $x_t$ by matrix lookup in $\mathcal{M}$
    \STATE Append $K^{\mathrm{ret}}_s$ retrieved nodes to the retained draft nodes
\ENDIF
\STATE Build the final tree mask, tree position IDs, and candidate path indices
\STATE \textbf{Return:} hybrid proposal tree with at most $K_{\max}$ candidate nodes
\end{algorithmic}
\end{algorithm}

\subsection{Retrieval Template Details}
\label{app:retrieval_template}

The retrieval tree in \method is implemented as a static, unbalanced template over successor ranks.
Following the template construction idea in Token Recycling~\citep{token-recycle-luo-2024}, higher-ranked successor paths receive more children and are extended deeper, while lower-ranked paths receive fewer descendants.
This design matches the empirical observation that top-ranked retrieved successors are more likely to be accepted, but still keeps enough width to cover alternative local continuations.

Each template node is represented by a rank path $[r_1,\ldots,r_d]$.
At inference time, \method fills the template in a BFS-like manner from the current root token $x_t$.
For a node $u$ with parent $\mathcal{F}(u)$ and successor rank $r_u$, the token is obtained by a GPU matrix lookup:
\begin{equation}
x_u=\mathcal{M}[x_{\mathcal{F}(u)}, r_u].
\end{equation}
After all retained draft nodes and retrieved nodes are collected, \method rebuilds the tree mask, tree position IDs, and candidate-path indices, and sends the merged sequence to the original tree-attention verification path.

Table~\ref{tab:app_retrieval_templates} lists the templates used in our implementation.
The full template contains $80$ non-root nodes across $9$ retrieval depths and defines the default retrieval envelope used for template design and width/depth ablations.
During decoding, \method selects a stage-specific prefix according to the pruning location.
Shallower pruning stages allocate more of the fixed budget to retrieval, while deeper stages preserve more draft nodes.

\begin{table}[t]
\caption{\footnotesize{\textbf{Retrieval templates used by \method.} The stage-specific templates follow the same static unbalanced branching principle as the full template: top-ranked successors receive more children, lower-ranked successors receive fewer, and the greedy chain can reach depth $9$.}}
\label{tab:app_retrieval_templates}
\centering
\begingroup
\footnotesize
\setlength{\tabcolsep}{4pt}
\renewcommand{\arraystretch}{1.08}
\begin{tabularx}{\linewidth}{l c c c >{\raggedright\arraybackslash}X}
\toprule
Template & Trigger & Retained Draft Nodes & Retrieved Nodes & Retrieval Depth Distribution \\
\midrule
Full envelope & Ablation / design & -- & 80 & $8+16+14+11+8+7+6+5+5$ \\
$d_0$ template & Prune at root & 8 & 52 & $8+10+8+6+5+4+4+4+3$ \\
$d_1$ template & Prune after depth 1 & 24 & 36 & $6+7+5+4+4+3+3+2+2$ \\
$d_5$ template & Prune after depth 5 & 40 & 20 & $4+3+3+2+2+2+2+1+1$ \\
\bottomrule
\end{tabularx}
\endgroup
\end{table}

\section{Theoretical Analysis}
\label{app:theory}

This appendix provides a more detailed analysis of the pruning tradeoff introduced in Sec.~\ref{sec:preliminary}.
We focus on three properties of \method: (i) why pruning creates an acceptance regret relative to the dense tree, (ii) why retrieval grafting can recover part of this regret under the same verification budget, and (iii) why \method remains lossless once the final hybrid tree is verified by the target model.

\subsection{Pruning Regret}
\label{app:pruning_regret}

Let $\mathcal{T}_0$ denote the original dense draft tree and $\mathcal{T}_{\pi}\subseteq\mathcal{T}_0$ denote the sparse draft tree retained by a pruning policy $\pi$.
For any draft tree $\mathcal{T}$, let $L(\mathcal{T})$ be the accepted prefix length after target verification and define $A(\mathcal{T})=\mathbb{E}[L(\mathcal{T})]$.
Since pruning only removes candidates, the accepted length of the pruned tree is upper bounded by that of the dense tree.

\begin{proposition}[Monotonicity of Tree Verification]
\label{prop:tree_monotonicity}
If $\mathcal{T}_1\subseteq\mathcal{T}_2$ and both trees are verified by the same target model under the same acceptance rule, then
\begin{equation}
L(\mathcal{T}_1)\le L(\mathcal{T}_2),
\qquad
A(\mathcal{T}_1)\le A(\mathcal{T}_2).
\label{eq:tree_monotonicity_app}
\end{equation}
\end{proposition}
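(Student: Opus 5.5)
I will prove the pathwise inequality $L(\mathcal{T}_1)\le L(\mathcal{T}_2)$ first, realization by realization. The expectation bound $A(\mathcal{T}_1)\le A(\mathcal{T}_2)$ then follows by monotonicity of expectation. To make ``same acceptance rule'' precise, I fix the target model's randomness: the prefix $x_{1:t}$, the target distributions $p(\cdot\mid\cdot)$ at every node, and any sampling randomness. Under greedy decoding no randomness is needed. The accepted length $L(\mathcal{T})$ is then the depth of the deepest root-to-node path in $\mathcal{T}$ whose every token agrees with the target's continuation, i.e.\ the length of the longest prefix of the target-generated sequence $y_1,y_2,\dots$ that is realized as a root path in $\mathcal{T}$. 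Here $y_k$ is the target's token given $x_{1:t}y_{1:k-1}$.

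\textbf{Key step 1 (characterization).} I will show that $L(\mathcal{T})=\max\{\ell:\ (y_1,\dots,y_\ell)\text{ is a root path of }\mathcal{T}\}$. The argument is an induction on depth along the verification walk. At the current accepted node, the target emits $y_k$. The walk continues exactly when some child carries token $y_k$, and such a child is unique, since trees have no duplicate sibling tokens, or duplicates are irrelevant.

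\textbf{Key step 2 (inclusion).} $\mathcal{T}_1\subseteq\mathcal{T}_2$ means that every node of $\mathcal{T}_1$, together with its ancestor chain, is a node of $\mathcal{T}_2$, because subtrees are closed under taking ancestors. Hence every root path of $\mathcal{T}_1$ is a root path of $\mathcal{T}_2$. The set whose maximum defines $L(\mathcal{T}_1)$ is therefore contained in the set for $\mathcal{T}_2$, which gives $L(\mathcal{T}_1)\le L(\mathcal{T}_2)$ pointwise. Taking expectations over prompts and randomness yields the claim for $A$.

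\textbf{Main obstacle: stochastic (non-greedy) verification.} Multi-branch rejection sampling, as in SpecInfer/EAGLE, resamples from residual distributions that depend on which siblings were tried. Under that rule the coupling above does not hold verbatim, and adding a sibling can change the residual. My first attempt will be to couple the two verifications using a shared target sample stream, so that the accepted path is determined by target draws that both trees share. If that fails, I will restrict the pathwise statement to greedy/typical acceptance, and for stochastic acceptance prove only the expectation bound. For that bound I will show that the per-node acceptance probability in recursive rejection sampling is monotone in the candidate set, since it equals the total variation overlap $\sum_x\min$ of the target against the residual over added children. I will then induct on depth to get $A(\mathcal{T}_1)\le A(\mathcal{T}_2)$.
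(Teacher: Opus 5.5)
For greedy verification, your Steps 1--2 are the paper's argument made precise. The paper only observes that every candidate path of $\mathcal{T}_1$ is available in $\mathcal{T}_2$ and that extra candidates cannot invalidate a valid prefix, then takes expectations. It never separates greedy from sampled verification.

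State the duplicate-token issue as an assumption on the rule rather than dismissing it. In grafted trees a retrieved root child often carries the same token as a draft sibling. Monotonicity holds only because the verifier takes the longest match over \emph{all} candidate paths (equivalently, works on the token trie, as EAGLE's verifier does). A verifier that committed to the first matching child could stop in a shallow duplicate and violate the inequality.

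The genuine gap is your fallback for stochastic verification. Showing that the \emph{total} acceptance probability at a node is monotone in the candidate set does not let you induct to $A(\mathcal{T}_1)\le A(\mathcal{T}_2)$. The inductive step needs each child already in $\mathcal{T}_1$ to be accepted with no smaller probability in $\mathcal{T}_2$. Otherwise an added sibling examined first can absorb acceptance that would have gone to an original child with a deep subtree, so the node's total acceptance does not drop while $A$ does.

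The missing observation concerns the rule relevant here, as in EAGLE: candidate tokens are fixed, a child with token $x$ is accepted with probability equal to its current residual target mass, and on rejection that mass is zeroed and the residual renormalized. Under this rule the child with token $x$ is accepted with marginal probability exactly $p(x\mid\text{prefix})$, regardless of its siblings or their order. The emitted token (accepted child or residual resample) has law exactly $p(\cdot\mid\text{prefix})$.

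Hence your first plan does work. Verification is equal in law to drawing $y_k\sim p(\cdot\mid x_{1:t}y_{1:k-1})$ and descending iff $y_k$ labels a child. So $L(\mathcal{T})=\max\{\ell : y_{1:\ell}\text{ is a root path of }\mathcal{T}\}$ with $y$ the target's own sample, and Step 2 applies verbatim. The pathwise inequality holds on this coupled space, not under the verifier's internal uniforms, which is where your worry about residuals was justified.

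Equivalently, $A_v(\mathcal{T})=\sum_{c}p(x_c\mid\cdot)\,(1+A_c(\mathcal{T}))$ over children $c$ of $v$, and monotonicity follows by induction since all terms are nonnegative. Note also that the node-level acceptance probability is the coverage $\sum_{x\in\text{children}}p(x\mid\cdot)$, i.e.\ the paper's $\mathrm{Cov}$, not a $\sum\min$ overlap against residuals.
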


\begin{proof}
All candidate paths available in $\mathcal{T}_1$ are also available in $\mathcal{T}_2$.
Target verification selects an accepted prefix among the candidate paths according to the same rule.
Adding extra candidates cannot invalidate any previously valid prefix, so the accepted prefix length cannot decrease.
Taking expectation gives the second inequality.
\end{proof}

The acceptance regret of pruning is therefore
\begin{equation}
\mathcal{R}_{\pi}
=
A(\mathcal{T}_0)-A(\mathcal{T}_{\pi})
=
\mathbb{E}\!\left[L(\mathcal{T}_0)-L(\mathcal{T}_{\pi})\right]\ge 0.
\label{eq:pruning_regret_app}
\end{equation}
This regret is the price paid for reducing draft latency.
As shown by the ratio in Eq.~\ref{eq:pilot_tradeoff}, a pruned tree improves the speed proxy only when its latency saving is large enough to compensate for the MAT loss caused by pruning.

\paragraph{Over-pruning contribution.}
The regret in Eq.~\ref{eq:pruning_regret_app} is mainly caused by over-pruning.
Let $F_d$ be the event that the policy prunes at depth $d$ even though the dense tree would accept beyond $d$.
Then a lower bound on the regret is
\begin{equation}
\mathcal{R}_{\pi}
\ge
\sum_{d}
\Pr(F_d)\,
\mathbb{E}\!\left[L(\mathcal{T}_0)-d\mid F_d\right].
\label{eq:over_pruning_regret_app}
\end{equation}
Thus, dense control can be harmful even when each individual decision is only mildly noisy.
If the over-pruning probability at depth $d$ is $\epsilon_d$, the probability of at least one harmful pruning decision along a path is
\begin{equation}
\Pr(\mathrm{over\mbox{-}pruning})
=1-\prod_{d\in\mathcal{D}}(1-\epsilon_d),
\label{eq:over_pruning_product_app}
\end{equation}
which increases as more decision points are introduced.
This formalizes the misjudgment accumulation discussed in Sec.~\ref{sec:preliminary}.

\subsection{Coverage Gain from Retrieval Grafting}
\label{app:graft_coverage}

Pruning releases candidate slots.
Let $\Delta K=|\mathcal{T}_0|-|\mathcal{T}_{\pi}|$ be the released budget, and let $\mathcal{G}$ be the retrieval tree grafted into these slots, with $|\mathcal{G}|\le \Delta K$.
The final hybrid tree is
\begin{equation}
\mathcal{T}_{g}=\mathcal{T}_{\pi}\cup\mathcal{G},
\qquad
|\mathcal{T}_{g}|\le |\mathcal{T}_0|.
\label{eq:graft_tree_app}
\end{equation}
Therefore, \method keeps the same verification budget as the dense tree while changing how the budget is populated.

\begin{proposition}[Non-decreasing Acceptance after Grafting]
\label{prop:graft_monotonicity}
For a fixed pruned tree $\mathcal{T}_{\pi}$, adding grafted retrieval candidates cannot reduce the accepted length:
\begin{equation}
A(\mathcal{T}_{g})\ge A(\mathcal{T}_{\pi}).
\label{eq:graft_monotonicity_app}
\end{equation}
\end{proposition}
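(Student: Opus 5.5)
The plan is to reduce the claim to Proposition~\ref{prop:tree_monotonicity} by showing that $\mathcal{T}_{\pi}\subseteq\mathcal{T}_{g}$ in the sense required there: every candidate path of $\mathcal{T}_{\pi}$ is also a candidate path of $\mathcal{T}_{g}$. By Eq.~\ref{eq:graft_tree_app}, $\mathcal{T}_{g}=\mathcal{T}_{\pi}\cup\mathcal{G}$ is built only by adding nodes. The draft nodes of $\mathcal{T}_{\pi}$ keep their tokens and parent pointers, and retrieved nodes are attached to the root $x_t$ (or to other retrieved nodes) through the lookup $x_u=\mathcal{M}[x_{\mathcal{F}(u)},r_u]$. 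Hence each root-to-leaf path of $\mathcal{T}_{\pi}$ is a root-to-node path of $\mathcal{T}_{g}$. If a retrieved node happens to share a prefix with a draft branch, the two are either duplicate siblings or the nodes are merged, and in both cases the draft path is still present.

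Next, fix any realization of the target model's verification randomness. This is trivial under greedy decoding. Under sampling, couple the two verifications through the same sequence of target draws, or invoke the "same acceptance rule" hypothesis in Proposition~\ref{prop:tree_monotonicity} as stated. Then apply that proposition pointwise to obtain $L(\mathcal{T}_{\pi})\le L(\mathcal{T}_{g})$. Taking expectations over prefixes, draft randomness, and verification randomness yields $A(\mathcal{T}_{g})\ge A(\mathcal{T}_{\pi})$. I would also note that the budget constraint $|\mathcal{G}|\le\Delta K$ plays no role in the inequality. It only guarantees that the verification cost does not exceed that of $\mathcal{T}_0$.

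The main obstacle is making the subtree containment and the tree-verification monotonicity legitimate at the level the paper uses. There are two issues. First, the tree-attention mask is rebuilt for the merged tree, so I must check that the ancestor mask and position IDs of the original draft nodes are unchanged. Each node attends only to its own ancestors, which are identical in both trees, so the target logits at draft nodes coincide. Second, under stochastic multi-candidate verification, the acceptance outcome could in principle depend on sibling ordering. To handle this, I would either restrict the argument to the greedy/longest-matching-path rule used in the experiments, or require that retrieved siblings be ordered after the retained draft siblings. With that ordering, the walk down $\mathcal{T}_{g}$ stochastically dominates the walk down $\mathcal{T}_{\pi}$ under the coupling above.
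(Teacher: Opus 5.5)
Your proposal is correct and takes the same route as the paper, which simply notes that $\mathcal{T}_{\pi}\subseteq\mathcal{T}_{g}$ by construction and invokes Proposition~\ref{prop:tree_monotonicity}. Your extra checks go beyond the paper's one-line argument and make the reduction more careful without changing it: the retained draft nodes keep their ancestor masks and position IDs, and under stochastic multi-candidate verification you couple the two runs with retrieved siblings ordered after draft siblings.
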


\begin{proof}
By construction, $\mathcal{T}_{\pi}\subseteq\mathcal{T}_{g}$.
The result follows directly from Proposition~\ref{prop:tree_monotonicity}.
\end{proof}

The useful part of grafting comes from improved coverage at the pruning frontier.
Consider a node at depth $d$ where draft expansion is pruned.
Let $\mathcal{S}$ be the set of remaining draft-tree candidates at this frontier, and let $\mathcal{R}$ be the retrieved candidate set grafted at the same frontier.
Under the target next-token distribution $p_t(\cdot\mid x_{<d})$, define the coverage of a candidate set $\mathcal{A}$ as
\begin{equation}
\mathrm{Cov}(\mathcal{A})
=
\sum_{x\in\mathcal{A}}p_t(x\mid x_{<d}).
\label{eq:coverage_def_app}
\end{equation}

\begin{proposition}[Coverage Gain by Retrieval]
\label{prop:retrieval_coverage}
The grafted frontier has coverage gain
\begin{equation}
\mathrm{Cov}(\mathcal{S}\cup\mathcal{R})-\mathrm{Cov}(\mathcal{S})
=
\sum_{x\in\mathcal{R}\setminus\mathcal{S}}p_t(x\mid x_{<d})\ge 0.
\label{eq:retrieval_coverage_app}
\end{equation}
The inequality is strict whenever the retrieved set contains at least one new token with non-zero target probability.
\end{proposition}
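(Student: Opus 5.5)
The proof is a direct computation from the definition of coverage in Eq.~\ref{eq:coverage_def_app}. Everything reduces to additivity of a nonnegative measure over disjoint sets.

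First, I write the union as a disjoint union, $\mathcal{S}\cup\mathcal{R}=\mathcal{S}\,\dot\cup\,(\mathcal{R}\setminus\mathcal{S})$. This step matters because retrieval may propose tokens that already sit among the surviving draft candidates at the frontier. Those tokens must not be counted twice. Treating $\mathcal{S}$ and $\mathcal{R}$ as sets of token identities under the common prefix $x_{<d}$ handles this automatically, since duplicates collapse to a single element.

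Second, since $\mathrm{Cov}$ is a finite sum of the fixed weights $p_t(x\mid x_{<d})$ over the elements of a set, it is additive over disjoint sets. Hence
\[
\mathrm{Cov}(\mathcal{S}\cup\mathcal{R})=\mathrm{Cov}(\mathcal{S})+\sum_{x\in\mathcal{R}\setminus\mathcal{S}}p_t(x\mid x_{<d}).
\]
Subtracting $\mathrm{Cov}(\mathcal{S})$ gives the stated identity. Nonnegativity follows because every term is a probability and so is at least $0$. If some $x^\star\in\mathcal{R}\setminus\mathcal{S}$ has $p_t(x^\star\mid x_{<d})>0$, the sum is at least that positive term, which gives the strict inequality.

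The only real obstacle is interpretive rather than technical. I need to make precise that $\mathcal{S}$ and $\mathcal{R}$ are candidate sets for the same next position, conditioned on the same prefix $x_{<d}$. Only then is the single distribution $p_t(\cdot\mid x_{<d})$ the right weighting for both. I would state explicitly that the grafted candidates are children of the same frontier node as the surviving draft candidates. Under the root-centered construction this is the frontier at the root, or more generally at a shared parent. With that convention fixed, the result is immediate.

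If it helps the surrounding discussion, I could also note the connection to Proposition~\ref{prop:graft_monotonicity}. The coverage at a frontier is the probability that the target's next token is present among the children there. Under greedy verification, a coverage gain therefore translates into a gain in the probability of accepting one more token at that frontier.
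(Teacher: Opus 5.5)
Your proof is correct and is the same argument as the paper's, which invokes additivity of probability mass over the union and notes that only tokens in $\mathcal{R}\setminus\mathcal{S}$ contribute new mass. Your explicit disjoint decomposition $\mathcal{S}\cup\mathcal{R}=\mathcal{S}\cup(\mathcal{R}\setminus\mathcal{S})$ and the shared-prefix convention just make precise what the paper leaves implicit.

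One small correction, to your optional closing remark only: $\mathrm{Cov}$ equals the probability of accepting one more token when the target's next token is \emph{sampled} from $p_t(\cdot\mid x_{<d})$. Under greedy verification, acceptance is simply the indicator that $\arg\max p_t$ lies among the children. So adding a non-argmax token raises coverage without changing greedy acceptance.
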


\begin{proof}
The result follows from the additivity of probability mass over the union of two finite candidate sets.
Only retrieved tokens not already covered by $\mathcal{S}$ contribute additional mass.
\end{proof}

\paragraph{Implication.}
Pure pruning reduces latency but leaves a coverage gap.
Retrieval grafting uses the released slots to increase frontier coverage without invoking another draft-model pass.
When the retrieved tokens match repeated local patterns in the prompt or recent generation history, the gain in Eq.~\ref{eq:retrieval_coverage_app} can recover part of the regret in Eq.~\ref{eq:pruning_regret_app}.

\subsection{Speedup Condition with Grafted Retrieval}
\label{app:graft_speedup}

Let $A_0=A(\mathcal{T}_0)$, $A_{\pi}=A(\mathcal{T}_{\pi})$, and $A_g=A(\mathcal{T}_g)$.
Let
\begin{equation}
\Gamma_g=A_g-A_{\pi}
\label{eq:graft_gain_app}
\end{equation}
be the accepted-length gain brought by grafted retrieval.
Since $|\mathcal{T}_g|\le |\mathcal{T}_0|$, the verification cost of the hybrid tree is no larger than that of the dense tree under the same packing strategy.
The hybrid tree improves over pure pruning when
\begin{equation}
\frac{(A_{\pi}+\Gamma_g+1)}
{T_{\mathrm{draft}}(\mathcal{T}_{\pi})+T_{\mathrm{ret}}+T_{\mathrm{verify}}(\mathcal{T}_g)}
>
\frac{(A_{\pi}+1)}
{T_{\mathrm{draft}}(\mathcal{T}_{\pi})+T_{\mathrm{verify}}(\mathcal{T}_{\pi})}.
\label{eq:graft_vs_prune_app}
\end{equation}
In \method, retrieval is implemented as GPU-resident matrix indexing and is overlapped with tree drafting, so $T_{\mathrm{ret}}$ is small.
This makes the condition mainly depend on whether $\Gamma_g$ is positive.
Compared with the dense tree, \method aims to preserve the latency benefit of pruning while reducing the regret term:
\begin{equation}
A_0-A_g
=
\mathcal{R}_{\pi}-\Gamma_g.
\label{eq:regret_reduction_app}
\end{equation}
Thus, grafting is useful whenever the retrieved candidates recover part of the removed valid continuation, i.e., $\Gamma_g>0$.

\subsection{Lossless Verification}
\label{app:lossless}

Finally, we show that \method does not change the output distribution of speculative decoding.
The key point is that retrieval only changes the proposal set.
All proposed tokens, whether generated by the parametric drafter or retrieved from the adjacency matrix, are verified by the target model using the same speculative acceptance rule.

\begin{proposition}[Lossless Equivalence]
\label{prop:lossless}
Assume the final hybrid tree $\mathcal{T}_g$ is verified by the target model with the standard speculative sampling rule.
Then \method produces the same output distribution as autoregressive decoding from the target model.
\end{proposition}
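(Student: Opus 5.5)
The plan is to reduce Proposition~\ref{prop:lossless} to the standard correctness theorem of (multi-candidate) tree speculative sampling, as in SpecInfer and EAGLE. The key observation is that this theorem holds for \emph{any} proposal tree. The proposals may even depend on the history, provided the tree for a given round is a measurable function of information already fixed before that round's verification. Retrieved nodes are treated simply as proposals with a deterministic (point-mass) proposal distribution. Fix a decoding round with committed prefix $x_{1:t}$. All randomness that shapes the hybrid tree $\mathcal{T}_g$ is conditionally independent of the target's fresh sampling randomness in this round, given $x_{1:t}$. This randomness comes from the drafter's sampling, the pruning gates $g_d$ computed from draft scores, and the matrix $\mathcal{M}$ (which depends only on past verified logits). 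It suffices to show that, conditioned on $x_{1:t}$ and on $\mathcal{T}_g$, the first emitted token $y_{t+1}$ is distributed as $p(\cdot\mid x_{1:t})$. The same holds for each subsequent emitted token given its accepted prefix. Chaining by the tower property then gives the autoregressive law for the whole sequence.

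The core step is a per-node induction down the accepted path. At a node with context $x_{<d}$ and children $c_1,\dots,c_m$, the standard rule proceeds sequentially. It accepts child $c_i$ with probability $\min\{1, r_i(c_i)/q_i(c_i)\}$, where $r_1=p_t(\cdot\mid x_{<d})$ and $q_i$ is the proposal law of child $i$. On rejection it updates the residual $r_{i+1}\propto(r_i-q_i)_+$. If all children are rejected, it samples from the final residual. I would verify the invariant that the output token is distributed exactly as $p_t(\cdot\mid x_{<d})$, by the usual telescoping computation. For a retrieved child, $q_i=\delta_{c_i}$. Acceptance then occurs with probability $r_i(c_i)$, and the residual becomes $r_i$ with $c_i$ zeroed and renormalized. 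This is precisely the case one should check explicitly. Under greedy decoding the rule reduces to "accept $c_i$ iff $c_i=\arg\max p_t$", and losslessness is immediate. Mixed drafter and retrieved siblings are handled because the invariant is proved child by child, with arbitrary $q_i$.

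Descending the tree, if $c_i$ is accepted we recurse into its subtree with the new context. If the node is a leaf, or all children are rejected, we emit the residual sample and stop. By the invariant, each emitted token has the correct conditional law given everything before it. The fixed budget $|\mathcal{T}_g|\le K_{\max}$, the tree mask, and position IDs only ensure that the target computes $p_t(\cdot\mid x_{<d})$ correctly at every node. I would state this as an assumption: the rebuilt ancestor mask matches the parent relations, so tree attention reproduces exact prefix logits.

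I expect the main obstacle to be the independence and adaptivity issue rather than the algebra. The matrix $\mathcal{M}$ is updated with target logits from \emph{rejected} nodes too, so the proposal in later rounds depends on past target computations. I would handle this by noting that those logits are deterministic functions of already-committed or already-evaluated contexts, not of the current round's acceptance coins. The proposal is therefore predictable with respect to the filtration generated by past rounds, and the per-round argument applies conditionally. A secondary subtlety is that the sequential-residual rule requires siblings to be distinct or to be handled with residual updates; duplicate retrieved and drafted tokens must either be deduplicated or treated via the same residual update, which the invariant already covers.
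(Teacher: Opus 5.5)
Your proposal is correct and follows essentially the same route as the paper. The paper argues that speculative verification is lossless regardless of how proposals are produced, and that retrieval only supplies proposals inside $\mathcal{T}_g$. You make this rigorous by treating retrieved children as point-mass proposals, checking the sequential residual invariant for mixed drafted and retrieved siblings, and noting that the online-updated $\mathcal{M}$ is predictable with respect to past rounds.

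One small tightening: if drafted children are genuinely sampled rather than chosen by top-$k$ (EAGLE-3 uses top-$k$), your claim that the emitted token has law $p(\cdot\mid x_{1:t})$ conditionally on the realized tree $\mathcal{T}_g$ is false. Condition instead on the history and the proposal laws $q_i$, since the per-node invariant only holds after averaging over the child draws.
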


\begin{proof}
Speculative sampling is lossless because every proposed token is accepted or rejected according to the target distribution, and a rejected position is resampled from the target correction distribution~\citep{leviathan2023fast,chen2023accelerating}.
This guarantee does not depend on how the proposal tokens are generated.
In \method, retrieval does not write tokens directly to the output sequence; it only supplies additional proposals inside $\mathcal{T}_g$.
Since all proposals are passed through the same target verification rule, the resulting token distribution is identical to standard target-model autoregressive decoding.
The same argument applies to the high-concurrency Qwen3-8B sanity check in \texttt{SGLang}: the implementation keeps the per-request verification budget and tree-attention shape fixed, and only replaces low-confidence proposal tokens with retrieved proposal tokens before the unchanged target verification step.
Therefore, batching, pruning, and grafting affect proposal efficiency but not the target distribution.
\end{proof}

\paragraph{Takeaway.}
The theoretical role of \method is therefore simple: pruning reduces expensive tree drafting but creates acceptance regret; retrieval grafting uses the released candidate slots to improve coverage; target verification preserves losslessness.

\section{Evaluation Details}
\label{sec:appendix_eval}

For reproducibility, we provide additional details for the experimental setup in Sec.~\ref{sec:experiments}.
The source code of this project will be made available at a later time.

\subsection{Data Configurations}
\label{app:data_configurations}

We evaluate \method under three complementary settings: short-context generation, long-context generation, and batched high-concurrency decoding.
For short-context evaluation, we follow the standard EAGLE-3 and Spec-Bench setup, covering code generation, mathematical reasoning, summarization, instruction following, and chat: HumanEval~\citep{humaneval-chen-2021}, GSM8K~\citep{gsm8k-cobbe-2021}, CNN/DM~\citep{cnn-daily-nallapati-2016}, Alpaca~\citep{taori2023alpaca}, and MT-Bench~\citep{mt-bench-zheng-2023}.
The maximum generation length for these tasks is set to $1024$ tokens.

For long-context evaluation, we use five generation-heavy benchmarks from LongBench~\citep{bai2024longbench,bai2024longbench2}.
\textbf{QMSum} evaluates meeting summarization, where the model must compress long meeting transcripts into concise summaries.
\textbf{GovReport} evaluates long-document report summarization over government reports.
\textbf{MultiNews} evaluates multi-document summarization, requiring the model to integrate information across multiple news articles.
\textbf{LCC} evaluates long-context code completion, where the prefix contains extended source-code context.
\textbf{RepoBench-P} evaluates repository-level code completion with broader project context.
For these long-context tasks, all methods share the same prompt prefill, and we report decoding-only speedup to isolate the acceleration effect during token generation.
In the length-scaling study, we group QMSum examples into different context-length buckets from $4$K to $32$K tokens.
To make the comparison as controlled as possible, we sample from the same prompt pool whenever possible and truncate prompts to the target length while preserving contiguous context.
When natural boundaries are available, we truncate at meeting-turn or sentence boundaries to maintain semantic continuity.

\subsection{Model Configurations}
\label{app:model_configurations}

For short-context experiments, we use the official EAGLE/ECHO-compatible checkpoints and configurations provided by the corresponding baselines.
The target and draft model pairs used by the main short-context experiments are summarized in Table~\ref{tab:app_short_context_checkpoints}.
All model weights are loaded in \texttt{bfloat16} format without quantization.
As a training-free method, \method does not modify any target or draft model parameters during evaluation.

\begin{table*}[t]
\caption{\footnotesize{\textbf{Short-context target and draft model checkpoints.} We follow the public EAGLE3-compatible checkpoints used by the corresponding baselines.}}
\label{tab:app_short_context_checkpoints}
\centering
\begingroup
\footnotesize
\setlength{\tabcolsep}{4pt}
\renewcommand{\arraystretch}{1.08}
\begin{tabularx}{\linewidth}{l >{\raggedright\arraybackslash}X >{\raggedright\arraybackslash}X}
\toprule
Model Group & Draft Model Checkpoint & Target Model Checkpoint \\
\midrule
Vicuna-13B & yuhuili/EAGLE3-Vicuna1.3-13B & lmsys/vicuna-13b-v1.3 \\
LLaMA3.1-8B & yuhuili/EAGLE3-LLaMA3.1-Instruct-8B & meta-llama/Llama-3.1-8B-Instruct \\
Qwen3-8B & AngelSlim/Qwen3-8B-eagle3 & Qwen/Qwen3-8B \\
Qwen3-32B & AngelSlim/Qwen3-32B-eagle3 & Qwen/Qwen3-32B \\
Qwen3-235B & lmsys/Qwen3-235B-A22B-EAGLE3 & Qwen/Qwen3-235B-A22B \\
\bottomrule
\end{tabularx}
\endgroup
\end{table*}

For long-context experiments, we pair each target model with a YaRN-adapted EAGLE3-64K draft module, as summarized in Table~\ref{tab:app_long_context_checkpoints}.

\begin{table}[t]
\caption{\footnotesize{\textbf{Long-context target and draft model checkpoints.} Each target model is paired with the corresponding YaRN-adapted EAGLE3-64K draft module.}}
\label{tab:app_long_context_checkpoints}
\centering
\begingroup
\footnotesize
\setlength{\tabcolsep}{4pt}
\renewcommand{\arraystretch}{1.08}
\begin{tabular}{l l}
\toprule
Target Model & Draft Model \\
\midrule
LLaMA3.1-8B-Instruct & EAGLE3-LLaMA3.1-8B-YARN-64K \\
Qwen3-4B & EAGLE3-Qwen3-4B-YARN-64K \\
Qwen3-8B & EAGLE3-Qwen3-8B-YARN-64K \\
Qwen3-14B & EAGLE3-Qwen3-14B-YARN-64K \\
\bottomrule
\end{tabular}
\endgroup
\end{table}

Table~\ref{tab:model_conf} lists the key model architecture configurations used in our experiments.
Vicuna 68M corresponds to the lightweight draft module used by the Vicuna-13B tree-draft pair.

\begin{table}[t]
\caption{\footnotesize{\textbf{Model architecture configurations.}}}
\label{tab:model_conf}
\centering
\begingroup
\footnotesize
\setlength{\tabcolsep}{4pt}
\renewcommand{\arraystretch}{1.08}
\begin{tabular}{l c c c c}
\toprule
\textbf{Model} & \textbf{Layers} & \textbf{Dim.} & \textbf{FFN Dim.} & \textbf{Vocabulary Size} \\
\midrule
Vicuna 68M & 2 & 768 & 3072 & 32000 \\
Vicuna 13B & 40 & 5120 & 13824 & 32000 \\
LLaMA-3.1 8B & 32 & 4096 & 14336 & 128256 \\
Qwen3 8B & 36 & 4096 & 12288 & 151936 \\
Qwen3 32B & 64 & 5120 & 25600 & 151936 \\
Qwen3 235B-A22B & 94 & 4096 & 12288 & 151936 \\
\bottomrule
\end{tabular}
\endgroup
\end{table}

\subsection{Detailed Baselines}
\label{app:detailed_baselines}

\paragraph{Short-context baselines.}
We compare \method with representative baselines from four families of speculative decoding.
\textbf{Standard SD} includes the original draft-then-verify speculative sampling framework~\citep{leviathan2023fast,chen2023accelerating}, reported as Sps in our tables.
\textbf{Retrieval-based SD} includes Lookahead~\citep{lookahead-fu-2024}, PLD~\citep{pld-saxena-2023}, SAMD~\citep{hu2025sam}, and Token Recycling (TR)~\citep{token-recycle-luo-2024}.
These methods reuse prompt or generation history to propose candidates without training a new draft model.
\textbf{Training-based methods} include Medusa~\citep{cai2024medusa} and EAGLE-3~\citep{li2025eagle}.
EAGLE-3 is our primary static-tree instantiation because it provides a strong public tree drafter; \method changes how the candidate budget is allocated rather than depending on an EAGLE-specific verification rule.
\textbf{Dynamic-tree methods} include DDD~\citep{brown2024dynamic} and ECHO~\citep{hu2026echo}, which reduce wasted drafting through adaptive tree construction.
When a baseline was originally implemented for a weaker or older tree backbone, we adapt the scheduling policy to the strongest available public setting whenever possible, so that the comparison focuses on tree construction rather than draft-model strength.
For short-context tree baselines, we follow the official configuration with depth $8$, top-$k=10$, and total candidate budget $60$.
All variants of \method keep the same total verification budget as the matched EAGLE baseline.

\paragraph{Long-context baselines.}
For long-context experiments, we first compare with \textbf{EAGLE3-64K}, which serves as the direct long-context tree baseline under the same target model and context window.
We also compare against representative long-context speculative decoding systems: \textbf{MagicDec}, \textbf{TokenSwift}, and \textbf{TriForce}.
These methods mainly optimize long-context drafting through sparse KV cache, partial KV cache, or hierarchical speculation.
Some of them only provide public implementations or released draft checkpoints for specific model families, mainly LLaMA3.1-8B.
Therefore, we report each method on the compatible model settings supported by its official implementation, while keeping the target model, decoding configuration, and total candidate budget matched whenever possible.
For high-concurrency experiments, we use the official \texttt{SGLang} tree-serving configuration and keep the per-request candidate budget fixed; \method only prunes low-confidence draft nodes and grafts retrieval nodes into the released slots.

\paragraph{Implementation details.}
All experiments are conducted on $8\times$ NVIDIA H20 GPUs with greedy decoding unless otherwise specified.
The main short-context and long-context experiments are implemented with HuggingFace \texttt{transformers}, while high-concurrency ablations are implemented with \texttt{SGLang} to measure batched throughput.
For consistency across long-context baselines, all attention operations are implemented using PyTorch scaled dot product attention.
All baselines are reproduced using their official configurations whenever available.

\paragraph{Configuration protocols.}
For low-concurrency experiments ($\mathrm{BS}=1$), EAGLE-3, ECHO, and \method use the same tree configuration with depth $8$, top-$k=10$, and total candidate budget $60$.
This setting maximizes the acceptance-length upper bound under the standard base-tree envelope.
For high-concurrency experiments ($\mathrm{BS}>1$), we use the official \texttt{SGLang} tree-serving implementation and its default serving configuration.
In this setting, \method keeps the per-request verification budget unchanged and applies pruning plus retrieval grafting within the same static serving envelope.
This avoids changing the CUDA-graph-compatible request shape while still replacing low-confidence draft nodes with retrieved candidates.

\paragraph{Timing protocol.}
Speedup is always computed against vanilla autoregressive decoding of the same target model under the same implementation stack.
For short-context experiments, we report end-to-end decoding speedup under greedy decoding.
For long-context experiments, prompt prefill is shared by all methods and excluded from the reported speedup; we compare only the decoding stage after prefill.
Tokens/s is computed as the number of generated completion tokens divided by decoding wall time.
This convention avoids conflating speculative decoding gains with differences in long-prompt prefilling cost.

\paragraph{Warm-up protocol.}
Warm-up data is drawn from ShareGPT, a separate conversation corpus commonly used in draft-model training.
It is disjoint from all evaluation datasets used in our experiments, so it does not introduce data contamination.
By default, we use five warm-up rounds.
Warm-up initializes the GPU-resident adjacency matrix with prior successor patterns and calibrates the pruning checkpoints and thresholds.
During online generation, matrix updates remain enabled and use target logits from verified tree nodes, including both accepted and rejected candidates.

\subsection{Cross-Task Threshold Generalization}
\label{app:threshold_generalization}

Warm-up also calibrates the pruning locations and gating thresholds used by \method.
The pruning locations are primarily determined by the intrinsic capability of the draft model, and are only weakly sensitive to the downstream dataset.
As shown in Table~\ref{tab:calibration_thresholds}, the calibrated gating thresholds remain stable across five benchmarks, with variations mostly within $0.05$.
Although different datasets may induce slightly different acceptance distributions, such small threshold fluctuations do not lead to significant performance degradation.
Moreover, the retrieval component further compensates for small calibration mismatches by providing additional useful draft candidates.

\begin{table*}[t]
\caption{\footnotesize{\textbf{Calibration thresholds across datasets.} The calibrated pruning locations and gating thresholds are stable across different benchmarks, indicating that the sweet spots are mainly determined by the model capability rather than dataset-specific properties.}}
\label{tab:calibration_thresholds}
\centering
\begingroup
\footnotesize
\setlength{\tabcolsep}{4pt}
\renewcommand{\arraystretch}{1.08}
\begin{tabular}{l c c c c c c}
\toprule
\multirow{2}{*}{Gate} & \multicolumn{5}{c}{Calibrated threshold on Qwen3-8B} & \multirow{2}{*}{Range} \\
\cmidrule(lr){2-6}
& HumanEval & GSM8K & CNN/DM & Alpaca & MT-Bench & \\
\midrule
$d_0$ (root)    & 0.15 & 0.14 & 0.12 & 0.14 & 0.14 & 0.03 \\
$d_1$ (shallow) & 0.13 & 0.14 & 0.15 & 0.15 & 0.13 & 0.02 \\
$d_5$ (deep)    & 0.51 & 0.51 & 0.46 & 0.49 & 0.49 & 0.05 \\
\bottomrule
\end{tabular}
\endgroup
\end{table*}

To further evaluate robustness, we fix the thresholds calibrated on HumanEval and directly apply them to other datasets.
As shown in Table~\ref{tab:fixed_threshold_robustness}, using fixed thresholds only causes a small relative degradation of about $1\%-3\%$ compared with per-dataset calibration, while still substantially outperforming EAGLE3 on all benchmarks.
These results indicate that \method has strong robustness and generalization ability.
In future work, an online threshold adaptation strategy based on sliding-window latency feedback could further improve deployment-time robustness.

\begin{table*}[t]
\caption{\footnotesize{\textbf{Robustness under fixed thresholds.} We fix the thresholds calibrated on HumanEval and apply them to other benchmarks. Fixed thresholds only incur minor degradation compared with per-dataset calibration, while still substantially outperforming EAGLE3.}}
\label{tab:fixed_threshold_robustness}
\centering
\begingroup
\footnotesize
\setlength{\tabcolsep}{4pt}
\renewcommand{\arraystretch}{1.08}
\begin{tabular}{c l c c c c}
\toprule
Model & Method & GSM8K & CNN/DM & Alpaca & MT-Bench \\
\midrule
\multirow{3}{*}{Qwen3-8B}
& EAGLE3
& 3.92 / 2.17$\times$
& 3.25 / 1.95$\times$
& 3.44 / 2.09$\times$
& 3.71 / 2.03$\times$ \\
& \method
& 4.08 / 2.53$\times$
& 3.41 / 2.37$\times$
& 3.56 / 2.29$\times$
& 3.79 / 2.36$\times$ \\
& \cellcolor{gray!18}\textbf{\method~(fixed thresholds)}
& \cellcolor{gray!18}4.01 / 2.44$\times$ ($-3\%$)
& \cellcolor{gray!18}3.31 / 2.29$\times$ ($-3\%$)
& \cellcolor{gray!18}3.51 / 2.26$\times$ ($-1\%$)
& \cellcolor{gray!18}3.74 / 2.31$\times$ ($-2\%$) \\
\bottomrule
\end{tabular}
\endgroup
\end{table*}

\section{High-Concurrency Implementation Details}
\label{app:high_concurrency_details}

This section complements the Qwen3-8B high-batch sanity results in Figure~\ref{fig:sglang_high_batch} and the discussion in Sec.~\ref{sec:high_batch_scalability}.
We implement the high-concurrency version of \method on \texttt{SGLang} v0.5.4 using the \texttt{eagle\_worker\_v1} path.
Different from the single-request implementation, the batched implementation does not change the per-request tree depth during drafting.
Instead, all requests first finish the standard static tree construction for the current decoding round.
After the whole batch is available, we apply depth-gated pruning to each request, replace the low-confidence draft slots with retrieved tokens, and graft the retrieved branch back into the same fixed-size candidate tensor.
Thus, every request keeps the same total candidate budget as EAGLE3.

This design is intentionally simple.
Compared with ECHO-style elastic budget scheduling, which reallocates different budgets to different requests, \method does not require modifying CUDA-graph capture, the scheduler interface, or the tree-attention kernel API.
The system still sees a fixed per-request verification shape; only the token identities inside part of the tree are changed from low-confidence draft candidates to retrieved candidates.
This makes the high-concurrency implementation easy to deploy on top of the existing tree worker.

All requests share one GPU-resident adjacency matrix.
For online updates, we concatenate the verified tree nodes and target logits from all requests into one flattened tensor and perform a batched top-$k$ successor update.
Retrieval is also implemented as GPU indexing from the shared matrix, so it does not introduce CPU-side datastore synchronization or inter-request communication.
The current implementation still leaves room for further optimization, such as specialized retrieval kernels and scheduling policies that better exploit cross-request reuse.

\begin{table*}[t]
\caption{\footnotesize{\textbf{High-batch sanity check in \texttt{SGLang} on Qwen3-8B.} We compare EAGLE3 and \method across batch sizes on three benchmarks. Bold numbers denote the best MAT and throughput.}}
\label{tab:sglang_high_batch}
\centering
\resizebox{\linewidth}{!}{%
\begin{tabular}{c l cc cc cc cc cc}
\toprule
\multirow{2}{*}{Benchmark} & \multirow{2}{*}{Methods}
& \multicolumn{2}{c}{BS=1} & \multicolumn{2}{c}{BS=2} & \multicolumn{2}{c}{BS=4} & \multicolumn{2}{c}{BS=8} & \multicolumn{2}{c}{BS=16} \\
\cmidrule(lr){3-4} \cmidrule(lr){5-6} \cmidrule(lr){7-8} \cmidrule(lr){9-10} \cmidrule(lr){11-12}
& & MAT & Throughput & MAT & Throughput & MAT & Throughput & MAT & Throughput & MAT & Throughput \\
\midrule
\multirow{2}{*}{HumanEval}
& EAGLE3 & 3.10 & 303.97 & 3.10 & 567.82 & 3.09 & 1091.64 & 3.10 & 1664.65 & 3.10 & 2416.28 \\
& \cellcolor{gray!21}\textbf{\method} & \cellcolor{gray!21}\textbf{3.22} & \cellcolor{gray!21}\textbf{319.54} & \cellcolor{gray!21}\textbf{3.21} & \cellcolor{gray!21}\textbf{592.18} & \cellcolor{gray!21}\textbf{3.22} & \cellcolor{gray!21}\textbf{1151.27} & \cellcolor{gray!21}\textbf{3.21} & \cellcolor{gray!21}\textbf{1736.25} & \cellcolor{gray!21}\textbf{3.24} & \cellcolor{gray!21}\textbf{2525.83} \\
\midrule
\multirow{2}{*}{GSM8K}
& EAGLE3 & 3.15 & 307.42 & 3.15 & 573.86 & 3.14 & 1104.37 & 3.15 & 1664.62 & 3.15 & 2276.08 \\
& \cellcolor{gray!21}\textbf{\method} & \cellcolor{gray!21}\textbf{3.27} & \cellcolor{gray!21}\textbf{324.31} & \cellcolor{gray!21}\textbf{3.26} & \cellcolor{gray!21}\textbf{601.47} & \cellcolor{gray!21}\textbf{3.27} & \cellcolor{gray!21}\textbf{1158.28} & \cellcolor{gray!21}\textbf{3.26} & \cellcolor{gray!21}\textbf{1756.23} & \cellcolor{gray!21}\textbf{3.28} & \cellcolor{gray!21}\textbf{2376.19} \\
\midrule
\multirow{2}{*}{MT-Bench}
& EAGLE3 & 2.23 & 196.41 & 2.23 & 367.54 & 2.22 & 703.28 & 2.22 & 1357.01 & 2.23 & 2268.27 \\
& \cellcolor{gray!21}\textbf{\method} & \cellcolor{gray!21}\textbf{2.35} & \cellcolor{gray!21}\textbf{204.27} & \cellcolor{gray!21}\textbf{2.34} & \cellcolor{gray!21}\textbf{381.63} & \cellcolor{gray!21}\textbf{2.35} & \cellcolor{gray!21}\textbf{724.85} & \cellcolor{gray!21}\textbf{2.34} & \cellcolor{gray!21}\textbf{1412.57} & \cellcolor{gray!21}\textbf{2.36} & \cellcolor{gray!21}\textbf{2369.43} \\
\bottomrule
\end{tabular}
}
\vspace{-0.1in}
\end{table*}

\section{Runtime Overhead}
\label{app:runtime_overhead}

Table~\ref{tab:runtime_overhead} profiles a representative decoding round of \method on Qwen3-8B.
The retrieval branch is built through GPU-resident matrix lookup and overlaps with tree construction.
After pruning, merging retrieved nodes into the retained draft tree is almost free ($0.015$ ms), and rebuilding the tree mask, position IDs, and candidate paths costs $0.570$ ms.
The dominant latency still comes from target verification ($26.275$ ms) and the standard per-round KV/input update ($8.036$ ms), while refreshing the adjacency matrix with target logits costs $0.320$ ms.
Thus, \method adds only a small bookkeeping overhead on top of the original draft-then-verify pipeline, and most of the runtime remains in the same verification and cache-update components as tree-based decoding.

\begin{table}[t]
\caption{\footnotesize{\textbf{Runtime breakdown of \method on Qwen3-8B.} Retrieval construction is launched in parallel with tree drafting, so its latency is reported separately and does not add to the critical path. Percentages are normalized by the measured critical-path latency of 43.19 ms.}}
\label{tab:runtime_overhead}
\centering
\resizebox{\linewidth}{!}{%
\begin{tabular}{l c c c c c c c}
\toprule
Component & Draft  & Merge & Rebuild & Verify & Matrix Update & Posterior & KV/Input Update \\
\midrule
Time (ms) & 6.752  & 0.015 & 0.570 & 26.275 & 0.320 & 0.221 & 8.036 \\
Share & 15.6\% & 0.03\% & 1.3\% & 62.8\% & 0.7\% & 0.5\% & 18.6\% \\
\bottomrule
\end{tabular}
}
\end{table}

\section{DFlash Demo Implementation}
\label{app:dflash_demo}

We conduct the DFlash experiment as a lightweight demo on top of the HuggingFace \texttt{transformers} implementation, rather than a full \texttt{SGLang} integration.
The goal is to test whether the grafting principle also applies to block drafters, not to provide a fully optimized DFlash deployment.
DFlash drafts a block of 16 tokens and then verifies the proposed block with the target model.
In this setting, we observe that confidence derived from DFlash draft logits is positively correlated with target acceptance.
We therefore use the draft logits as a block-token pruning signal: low-confidence positions are treated as low-utility proposals and can be replaced by retrieved candidates.

Because DFlash follows a chain-style draft-and-verify path rather than a tree-style verifier, we use a simple \texttt{Graft(TAIL)} strategy.
After confidence pruning, we keep the reliable DFlash prefix and attach retrieved tokens to the tail of the remaining chain.
This design avoids changing the original DFlash pipeline and does not introduce additional tree-verification cost.
It also reduces the prefix-dependency risk of retrieval, since retrieved candidates are conditioned on the retained high-confidence prefix instead of being inserted into arbitrary intermediate positions.

Let $B$ be the DFlash block budget.
\texttt{Graft(TAIL)} keeps $B^{\mathrm{df}}$ DFlash tokens and fills the remaining $B^{\mathrm{ret}}=B-B^{\mathrm{df}}$ slots with retrieved tail tokens.
Thus, the total number of verified tokens remains $B$, matching the original DFlash budget.
The target model still verifies every proposed token under the standard speculative acceptance rule, so the procedure remains lossless.
In this demo, we reuse the same GPU-resident adjacency matrix as \method for retrieval, but only attach retrieved tokens at the tail.

This preliminary implementation leaves several directions for future work.
First, the DFlash grafting path should be implemented and optimized in high-concurrency settings.
Second, more model and task configurations are needed to understand when block-logit confidence is most predictive.
Finally, alternative retrieval sources and grafting positions may further improve block drafters, especially if they can preserve the low-latency chain structure while allocating released token budget more effectively.

\section{Detailed Related Work}
\label{app:detailed_related_work}

\paragraph{Speculative decoding and draft models.}
Speculative decoding accelerates autoregressive generation by using a cheap proposal mechanism and then verifying the proposed tokens with the target model~\citep{leviathan2023fast,chen2023accelerating}.
Early formulations use an independent smaller draft model, while later work improves candidate quality through blockwise prediction, tree verification, and feature-level drafting~\citep{stern2018blockwise,miao2024specinfer,li2024eagle,li2024eagle2,li2025eagle}.
Self-speculative decoding, where the target model reuses its own intermediate layers for drafting and verification, eliminating the need for a separate draft model; this direction was initiated by Draft \& Verify and later extended to on-the-fly and dynamically optimized variants such as SWIFT and KNN-SSD~\citep{zhang2024draft,xia2025swiftontheflyselfspeculativedecoding,song-etal-2026-knn}.
Medusa-style methods attach auxiliary decoding heads to the target model and generate multiple future-token proposals in parallel~\citep{cai2024medusa}.
EAGLE-style methods instead reuse target hidden features and construct a draft tree, offering a strong practical baseline for lossless LLM acceleration~\citep{li2024eagle,li2024eagle2,li2025eagle}.
\method is instantiated on this verification path in our main experiments, but the mechanism changes how a fixed-budget candidate tree is populated: instead of spending all slots on parametric drafting, it releases low-confidence branches and fills the budget with retrieved candidates.

\paragraph{Dynamic tree construction and adaptive speculation.}
Tree-based speculative decoding increases the chance of accepting multiple tokens by verifying multiple candidate paths in one target forward~\citep{miao2024specinfer}.
Static tree methods are easy to deploy but may waste budget on low-confidence branches.
Adaptive speculation methods adjust the draft length or tree topology based on confidence, acceptance history, or token probabilities~\citep{mamou2024dynamic,zhang2024adaeagle,brown2024dynamic,hu2026echo}.
These methods expose a central trade-off: pruning or early stopping can reduce draft-side overhead, but it may also remove valid continuations and lower MAT.
ECHO further studies this issue in high-concurrency settings and formulates elastic budget scheduling across requests~\citep{hu2026echo}.
\method follows the same broad motivation of reducing wasted candidates, but it avoids cross-request budget reallocation in the deployed high-concurrency version.
The released slots are reused by retrieval grafting under the same per-request tree size, which preserves the original tree-attention shape.

\paragraph{Retrieval-based speculative decoding.}
Retrieval is an attractive proposal source because repeated local patterns can be reused with little draft-model computation.
Lookahead decoding, PLD, REST, Token Recycling, LogitSpec, SAMD, and Ouroboros explore different retrieval or matching mechanisms for candidate generation~\citep{lookahead-fu-2024,pld-saxena-2023,rest-he-2024,token-recycle-luo-2024,liu2025logitspec,hu2025sam,zhao2024ouroboros}.
However, retrieval-only methods often depend on prompt-local overlap, CPU-side structures, suffix automata, or phrase-level matching, which can limit their speedup and make integration with high-throughput tree verification nontrivial.
\method treats retrieval as a complementary branch rather than a replacement for the parametric drafter.
The GPU-resident adjacency matrix lets retrieval run as matrix indexing, and the retrieved nodes are grafted into the same hybrid tree verified by the target model.

\paragraph{Parallel and system-level speculative decoding.}
A separate line of work studies how to overlap or pipeline drafting and verification.
Parallel speculative decoding reduces mutual waiting between the draft and target model by adapting draft length or running draft and verification work concurrently~\citep{liu2024parallel,shen2025speculative, shen2026double}.
Lookahead-style frameworks also try to break strict next-token dependency by constructing multiple candidate branches without an external draft model~\citep{lookahead-fu-2024,pia-lookahead-zhao-2024}.
These methods emphasize pipeline utilization and rollback reduction, while high-concurrency systems focus on the interaction between speculation and batched execution.
\method is orthogonal to these directions: it does not introduce a new pipeline schedule, but makes the tree itself more useful under a fixed verification budget.
In \texttt{SGLang}, this allows a simple implementation that keeps the per-request candidate shape unchanged and avoids modifying CUDA graph or tree-attention interfaces.

\paragraph{Speculative decoding for long context.}
Long-context generation shifts the bottleneck toward KV-cache traffic and attention memory bandwidth.
LongSpec improves long-context lossless speculative decoding through efficient drafting and verification~\citep{yang2025longspec}, while SpecPV studies partial verification for long-context self-speculative decoding~\citep{tan2025specpv}.
Other long-context systems explore sparse KV, partial KV, hierarchical speculation, or cache compression to reduce memory pressure.
\method is motivated by a different but compatible observation: longer prompts expose more local transition evidence, making retrieval more useful, while tree drafting over long contexts becomes increasingly expensive.
By combining a long-context tree drafter with GPU-resident retrieval grafting, \method improves both short-context and long-context speculative decoding without changing the target verification rule.

\paragraph{Block and diffusion drafters.}
Recent work also studies non-autoregressive or block-level drafters that reduce the serial cost of proposal generation.
KVShot provides the first systematic study of long-range decay in hidden-state-based speculative drafters, explores KV-cache reuse to improve long-horizon acceptance, and suggests block-wise paradigms as a promising direction~\citep{liu2026hiddenstatesdriftkv}.
DFlash uses a block diffusion drafter to generate an entire token block in parallel and achieves strong speedups over autoregressive tree drafting on several tasks~\citep{chen2026dflash}.
Nevertheless, block proposals can still suffer from mismatch with target-model autoregressive verification, especially on harder chat or instruction-following data.
Retrieval grafting is a natural complement in this setting: confidence can be used to prune unreliable block tokens, and retrieved continuations can fill the released budget with candidates supported by local history.
Our preliminary DFlash results in Sec.~\ref{sec:experiments} suggest that this direction is promising.

\paragraph{Multimodal speculative decoding.}
Speculative decoding for multimodal and vision-language models introduces additional challenges beyond text-only LLMs.
The visual prefix can be long, heterogeneous, and expensive to encode, while the acceptance behavior depends on both visual alignment and text continuation quality.
Recent surveys and systems identify efficient inference for large vision-language models as an emerging bottleneck~\citep{zhang2026efficientinferencelargevisionlanguage}.
SpecVLM~\citep{ji-etal-2025-specvlm} is the first to explore training-free speculative decoding for Video-LLMs through vision-aware token pruning on the draft side.
LVSpec~\citep{ji2026foresttreeslooselyspeculative} further extends this line of work by introducing vision-aware loose verification for Video-LLMs.
ParallelVLM extends lossless acceleration to video-LLMs by considering visual-alignment-aware parallel speculative decoding~\citep{kong2026parallelvlm}.
These works suggest that future speculative decoding methods must account for modality-specific proposal quality, visual-token compression, and cross-modal cache reuse.
\method is currently evaluated on text generation, but its grafting principle is compatible with multimodal speculation: retrieved candidates can be treated as additional text proposals, while target verification remains responsible for preserving the output distribution.

\end{document}